
\documentclass[11pt]{article}

\usepackage{times, fullpage}
\usepackage{graphicx} 
\usepackage{subfigure} 



\usepackage{amsthm,amsfonts,amsmath,amssymb,color,float,graphicx,verbatim}
\usepackage{natbib}
\usepackage{algorithm,algorithmic}

\usepackage{hyperref}
\hypersetup{
	hidelinks = true
}

\usepackage{mathtools}
\usepackage{bbm}
\usepackage{csquotes}

\newtheorem{theorem}{Theorem}

\newtheorem{lemma}{Lemma}
\newtheorem{corollary}{Corollary}
\newtheorem{definition}{Definition}
\newtheorem{assumption}{Assumption}

\newcommand{\ceil}[1]{\left\lceil #1 \right\rceil}
\newcommand{\abs}[1]{\left| #1 \right|}

\newcommand{\reals}{\mathbb{R}}
\newcommand{\E}{\mathbb{E}}

\newcommand{\relu}[1]{\left[ #1 \right]_+}

\newcommand{\set}[1]{\left\lbrace#1\right\rbrace}

\newcommand{\p}[1]{\left( #1 \right)}
\newcommand{\pcc}[1]{\left[ #1 \right]}

\newcommand{\bbr}{\mathbb{R}}

\newcommand{\bx}{\mathbf{x}}
\newcommand{\bw}{\mathbf{w}}

\newcommand{\bb}{\mathbf{b}}
\newcommand{\bu}{\mathbf{u}}
\newcommand{\bv}{\mathbf{v}}
\newcommand{\bz}{\mathbf{z}}

\newcommand{\by}{\mathbf{y}}
\newcommand{\bn}{\mathbf{n}}

\newcommand{\bsigma}{\boldsymbol{\sigma}}

\newcommand{\Ocal}{\mathcal{O}}

\newcommand{\Xcal}{\mathcal{X}}

\newcommand{\norm}[1]{\left\|#1\right\|}
\newcommand{\inner}[1]{\left\langle#1\right\rangle}

\newcommand{\ind}[1]{\mathbbm{1}\left\lbrace#1\right\rbrace}

\newcommand{\secref}[1]{Sec.~\ref{#1}}
\newcommand{\subsecref}[1]{Subsection~\ref{#1}}
\newcommand{\figref}[1]{Fig.~\ref{#1}}
\renewcommand{\eqref}[1]{Eq.~(\ref{#1})}
\newcommand{\lemref}[1]{Lemma~\ref{#1}}
\newcommand{\corollaryref}[1]{Corollary~\ref{#1}}
\newcommand{\thmref}[1]{Thm.~\ref{#1}}



\begin{document}
	
\title{Depth-Width Tradeoffs in Approximating Natural Functions 
	with Neural Networks}

\author{Itay Safran\\Weizmann Institute of Science\\\texttt{itay.safran@weizmann.ac.il}\and
	Ohad Shamir
	\\Weizmann Institute of Science\\{\texttt{ohad.shamir@weizmann.ac.il}}
}
\date{}
\maketitle

	\begin{abstract}
		We provide several new depth-based separation results for feed-forward 
		neural networks, proving that various types of simple and natural 
		functions can be better approximated using deeper networks than 
		shallower ones, even if the shallower networks are much larger. 
		This includes indicators of balls and ellipses; non-linear functions 
		which are 
		radial with respect to the $L_1$ norm; and smooth non-linear functions. 
		We also show that these gaps can be observed experimentally: 
		Increasing the depth indeed allows better learning than increasing 
		width, when training neural networks to learn an indicator of a unit 
		ball.
	\end{abstract}
	
	\section{Introduction}
	Deep learning, in the form of artificial neural networks, has seen a 
	dramatic resurgence in the past recent years, achieving great performance 
	improvements in various fields of artificial intelligence such as computer 
	vision and speech recognition. While empirically successful, our 
	theoretical	understanding of deep learning is still limited at best.
	
	An emerging line of recent works has studied the \emph{expressive power} of 
	neural networks: What functions can and cannot be represented by networks 
	of a given architecture (see related work section below). A particular 
	focus has been the trade-off between the network's width and depth: On the 
	one hand, it is well-known that large enough networks of depth $ 2 $ can 
	already approximate any continuous target function on  
	$ \pcc{0,1}^d $ to arbitrary accuracy
	\citep{cybenko1989approximation,hornik1991approximation}. On the other 
	hand, it has long been evident that deeper networks tend to perform 
	better than shallow ones, a phenomenon supported by the intuition that 
	depth, providing compositional expressibility, is necessary for efficiently 
	representing some functions. Indeed, recent empirical evidence suggests 
	that even at large depths, deeper networks can offer benefits over 
	shallower networks \citep{he2015deep}.
	
	To demonstrate the power of depth in neural networks, a clean and precise 
	approach is to prove the existence of functions which can be expressed (or 
	well-approximated) by moderately-sized networks of a given depth, yet 
	cannot be approximated well by shallower networks, even if their size is 
	much larger. However, the mere existence of such functions is not enough: 
	Ideally, we would like to show such depth separation results using 
	\emph{natural}, interpretable functions, of the type we may expect neural 
	networks to successfully train on. Proving that 
	depth is necessary for such functions can give us a clearer and more useful 
	insight into what various neural network architectures can and cannot 
	express in practice. 
	
	In this paper, we provide several contributions to this emerging line of 
	work. We focus on standard, vanilla feedforward networks (using some fixed 
	activation function, such as the popular ReLU), and measure expressiveness 
	directly in terms of approximation error, defined as the expected squared 
	loss with respect to some distribution over the input domain. In this 
	setting, we 
	show the following: 
	\begin{itemize}
	\item We prove that the indicator of the Euclidean unit ball, 
	$\bx\mapsto \ind{\norm{\bx}\leq 1}$ in $\reals^d$, which can be easily 
	approximated to accuracy $\epsilon$ using a 3-layer network with 
	$\Ocal(d/\epsilon)$ neurons, cannot be approximated to an accuracy higher 
	than $\Ocal(1/d^4)$  using a 2-layer network, unless its width is 
	exponential in $d$. In fact, we show the same result more generally, for 
	any	indicator of an ellipsoid $\bx\mapsto \ind{\norm{A\bx+\mathbf{b}}\leq 
	r}$ (where $A$ is a non-singular matrix and $\bb$ is a vector). The proof 
	is based on a reduction from the main result of \cite{eldan2016power}, 
	which shows a separation between 2-layer and 3-layer networks using a  
	more complicated and less natural radial function.
	\item We show that this depth/width trade-off can also be observed 
	experimentally: 
	Specifically, that
	the indicator of a unit ball can be learned quite well using a 3-layer 
	network, using standard backpropagation,
	but learning the same function with a 2-layer network (even if much 
	larger) is significantly more difficult. 
	Our theoretical result indicates that this gap in performance is due to approximation error issues.
	This experiment also highlights the fact that our separation result is for 
	a natural function that is not just well-approximated by some 3-layer 
	network, but can also be learned well from data using standard methods. 
	\item We prove that \emph{any} $L_1$ radial function $\bx\mapsto 
	f(\norm{\bx}_1)$, where $\bx\in\reals^d$ and $f:\reals\rightarrow\reals$ is 
	piecewise-linear, cannot be approximated to accuracy $\epsilon$ by a depth 
	2 ReLU network of width less than 
	$\tilde{\Omega}(\min\{1/\epsilon,\exp(\Omega(d))\})$. In 
	contrast, such functions can be represented \emph{exactly} by 3-layer ReLU 
	networks. 
	\item Finally, we prove that any member of a wide family of non-linear and 
	twice-differentiable functions (including for instance $x\mapsto x^2$ in 
	$[0,1]$), which can be approximated to accuracy $\epsilon$ using ReLU 
	networks of depth and width $\Ocal(\text{poly}(\log(1/\epsilon)))$, cannot 
	be approximated to similar accuracy by constant-depth ReLU networks, unless 
	their width is at least $\Omega(\text{poly}(1/\epsilon))$. We note that a similar 
	result appeared online concurrently and independently of ours in  
	\cite{yarotsky2016error,liang2016deep}, but the setting is a bit different 
	(see related work below for more details). 
	\end{itemize}
	
	\subsubsection*{Related Work}
	
	The question of studying the effect of depth in neural network has received 
	considerable attention recently, and studied under various settings. Many 
	of these works consider a somewhat different setting than ours, and hence 
	are not directly comparable. These include networks which are not 
	plain-vanilla ones (e.g. 
	\cite{cohen2016expressive,delalleau2011shallow,martens2014expressive}), 
	measuring quantities other than 
	approximation error (e.g. \cite{bianchinicomplexity,poole2016exponential}), 
	focusing only on approximation upper bounds 
	(e.g. \cite{shaham2016provable}), or measuring approximation 
	error in terms of $L_{\infty}$-type bounds, i.e. 	
	$\sup_{\bx}|f(\bx)-\tilde{f}(\bx))|$ rather than $L_2$-type bounds		
	$\E_{\bx}(f(\bx)-\tilde{f}(\bx))^2$ (e.g. 		
	\cite{yarotsky2016error,liang2016deep}). We note that the latter 
	distinction is important: Although $L_{\infty}$ bounds are more common in 
	the 
	approximation theory literature, $L_2$ bounds are more natural in the 
	context of statistical machine learning problems (where we care about the 
	expected loss over some distribution). Moreover, $L_2$ approximation lower 
	bounds are stronger, in 
	the sense that an $L_2$ lower bound easily 
	translates to a lower bound on $L_{\infty}$ lower bound, but not vice 
	versa\footnote{To give a 
	trivial example, ReLU networks always express continuous functions, and 
	therefore can never approximate a discontinuous 
	function such as $x\mapsto\ind{x\geq 0}$ in an $L_{\infty}$ sense, yet can 
	easily approximate it in an $L_{2}$ sense given any continuous 
	distribution.}.
	
	A noteworthy paper in the same setting as ours is 
	\cite{telgarsky2016benefits}, which proves a separation 
	result between the expressivity of ReLU networks of depth $ k $ and depth $ 
	o\p{k/\log\p{k}} $ (for any $k$). This holds even for 
	one-dimensional functions, where a depth $k$ network is shown to realize a 
	saw-tooth function with $\exp(\Ocal(k))$ oscillations, whereas any network 
	of depth $o\p{k/\log\p{k}} $ would require a width super-polynomial 
	in $k$ to approximate it by more than a constant. In fact, we ourselves 
	rely on this construction in the proofs of our results in 
	\secref{sec:C2}. On the flip side, in our paper 
	we focus on separation in terms of the accuracy or dimension, rather than a 
	parameter $k$. Moreover, the construction there relies on a highly  
	oscillatory function, with Lipschitz constant exponential in $ k $ almost 
	everywhere. In 
	contrast, in our paper we focus on simpler functions, of the type that 
	are likely to be learnable from data 
	using standard methods. 
	
	Our separation results in \secref{sec:C2} (for smooth non-linear functions)
	are closely related 
	to those of \cite{yarotsky2016error,liang2016deep}, which appeared 
	online concurrently and independently of our work, and the proof ideas are 
	quite similar. However, these papers focused on $L_{\infty}$ 
	bounds rather than $L_{2}$ bounds. Moreover, \cite{yarotsky2016error} 
	considers a class of functions different than ours in their 
	positive results, and \cite{liang2016deep} consider networks employing a 
	mix of ReLU and threshold activations, whereas we consider a purely ReLU 
	network. 
	
	Another relevant and insightful work is \cite{poggio2016and}, which 
	considers width vs. depth and provide general results on expressibility of 
	functions with a compositional nature. However, the focus there is on 
	worse-case approximation over general classes of functions, rather than 
	separation results in terms of	specific functions as we do here, and the 
	details and setting is somewhat	orthogonal to ours.

	\section{Preliminaries}\label{sec:prelim}
	
	In general, we let bold-faced letters such as $\bx=(x_1,\ldots,x_d)$ denote 
	vectors, and capital letters denote matrices or probabilistic events. 
	$\norm{\cdot}$ denotes the Euclidean norm, and $\norm{\cdot}_1$ the 
	$1$-norm. $\ind{\cdot}$ denotes the indicator function. We use the standard 
	asymptotic notation $\Ocal(\cdot)$ and $\Omega(\cdot)$ to hide constants, 
	and $\tilde{\Ocal}(\cdot)$ and $\tilde{\Omega}(\cdot)$ to hide constants 
	and factors logarithmic in the problem parameters.
	
	\textbf{Neural Networks.} We consider feed-forward neural 
	networks, computing functions from 
	$\reals^d$ to $\reals$. The network is composed of layers of neurons, where 
	each neuron computes a function of the form 
	$\bx\mapsto\sigma(\bw^{\top}\bx+b)$, where $\bw$ is a weight vector, $b$ is 
	a bias term and $\sigma:\reals\mapsto\reals$ is a non-linear activation 
	function, such as the ReLU function $\sigma(z) = [z]_+ = \max\{0,z\}$. 
	Letting $\sigma(W\bx+\bb)$ be a shorthand for 
	$\p{\sigma(\bw_1^{\top}\bx+b_1),\ldots,\sigma(\bw_n^{\top}\bx+b_n)}$, we 
	define a layer of $n$ neurons as $\bx\mapsto\sigma(W\bx+\bb)$. By denoting 
	the output of the $i^{\text{th}}$ layer as $O_i$, 
	we can define a network of arbitrary depth recursively by 
	$O_{i+1}=\sigma(W_{i+1}O_i+\bb_{i+1})$, where $W_i,\bb_i$ 
	represent the matrix of weights and bias of the $i^{\text{th}}$ layer, 
	respectively. Following a standard convention for multi-layer networks, the 
	final layer $h$ is a purely linear function with no bias, i.e. 
	$O_h=W_h\cdot O_{h-1}$. We define the \emph{depth} of the network as the 
	number of layers $l$, and denote the number of neurons $n_i$ in the 
	$i^{\text{th}}$ layer as the \emph{size} of the layer. We define the 
	\emph{width} of a network as $\max_{i\in\set{1,\dots,l}}n_i$. Finally, a 
	\emph{ReLU network} is a neural network where all the non-linear 
	activations are the ReLU function. We use ``2-layer'' and ``3-layer'' to 
	denote networks of depth 2 and 3. In particular, in our notation a 2-layer 
	ReLU network has the form
	\[
	\bx \mapsto \sum_{i=1}^{n_1}v_i\cdot [\bw_i^\top \bx+b_i]_+
	\]
	for some parameters $v_1,b_1,\ldots,v_{n_1},b_{n_1}$ and $d$-dimensional 
	vectors 
	$\bw_1,\ldots,\bw_{n_1}$. Similarly, a 3-layer ReLU network has the form
	\[
	\sum_{i=1}^{n_2}u_i\left[\sum_{j=1}^{n_1}v_{i,j}\left[\bw_{i,j}^\top\bx
	+b_{i,j}\right]_++c_{i}\right]_+
	\]
	for some parameters $\{u_i,v_{i,j},b_{i,j},c_i,\bw_{i,j}\}$.

	\textbf{Approximation error.} Given some function $f$ on a domain $\Xcal$ 
	endowed with some probability distribution (with density function $\mu$), 
	we define the 
	quality of its approximation by some other function $\tilde{f}$ as 
	$\int_{\Xcal} (f(\bx)-\tilde{f}(\bx))^2 \mu(\bx)d\bx = 
	\E_{\bx\sim\mu}[(f(\bx)-\tilde{f}(\bx))^2]$. We refer to this as 
	approximation in the $L_2$-norm sense. In one of our results 
	(\thmref{thm:main_ubound}), we also 
	consider approximation in the $L_{\infty}$-norm sense, defined as
	$\sup_{\bx\in\Xcal}|f(\bx)-\tilde{f}(\bx)|$. Clearly, this 
	upper-bounds the (square root of the) $L_2$ approximation error defined 
	above, so as discussed in the introduction, lower bounds on the $L_2$ 
	approximation error (w.r.t. any distribution) are stronger than lower 
	bounds on the $L_{\infty}$ approximation error. 
	
	\section{Indicators of $L_2$ Balls and Ellipsoids}\label{sec:indicators}
	
	We begin by considering one of the simplest possible function classes on 
	$\reals^d$, namely indicators of $L_2$ balls (and more generally, 
	ellipsoids). The ability to compute such functions is necessary for many 
	useful primitives, for example determining if the distance between two 
	points in Euclidean space is below or above some threshold (either with 
	respect to the Euclidean distance, or a more general Mahalanobis distance). 
	In this section, we show a depth separation result for such functions:	
	Although they can be easily approximated with 3-layer networks, no 2-layer 
	network can approximate it to high accuracy w.r.t. any distribution, unless 
	its width is 
	exponential in the dimension. This is formally stated in the following 
	theorem: 	
	\begin{theorem}[Inapproximability with 2-layer 
	networks]\label{thm:ellipsoidlowbound}
		The following holds for some positive universal constants 
		$c_1,c_2,c_3,c_4$, and any network employing an activation function 
		satisfying Assumptions 1 and 2 in \citet{eldan2016power}: For any 
		$d>c_1$, and any non-singular matrix $ 
		A\in\bbr^{d\times d} $, $ 
		\bb\in\bbr^d 
		$ and $r\in (0,\infty)$, there exists a continuous probability distribution 
		$\gamma$ on $\bbr^d$, such that for any function $ g $ computed by a 
		2-layer network of width at most $c_3 \exp(c_4 d)$, and for the 
		function $ f(\bx)=\ind{\norm{A\bx+\bb}\le r} $, we have 
		\begin{equation*}
			\int_{\bbr^d}\p{f(\bx)-g(\bx)}^2\cdot\gamma(\bx)d\bx~\ge~ 
			\frac{c_2}{d^4}.
		\end{equation*} 
	\end{theorem}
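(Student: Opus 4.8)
The plan is to prove the theorem by a reduction from the 2-layer lower bound of \citet{eldan2016power}. That result provides, for every dimension $d$ above a universal constant, a \emph{radial} function $g_{\star}(\by)=\varphi(\norm{\by})$ — with $\varphi$ Lipschitz, compactly supported on an interval $[0,R]$ with $R=\Ocal(\sqrt{d})$, and with $\mathrm{TV}(\varphi)\le\mathrm{poly}(d)$ — together with a continuous probability distribution $\mu$ on $\reals^d$, such that any function computed by a 2-layer network using the given activation (which satisfies their Assumptions 1 and 2) of width at most $\exp(c\,d)$ has squared $L_2(\mu)$-error at least a universal constant $c'>0$ from $g_{\star}$; we use only this lower-bound half of their theorem. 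First I reduce to the Euclidean unit ball. Given $A,\bb,r$ as in the statement, the map $T(\bx)=\tfrac1r(A\bx+\bb)$ is an invertible affine bijection and $f(\bx)=\ind{\norm{T(\bx)}\le1}$, so whenever $g$ is computed by a 2-layer network, so is $g\circ T^{-1}$ (of the same width), and $\int(f-g)^2\,d\gamma=\int(\ind{\norm{\by}\le1}-g\circ T^{-1}(\by))^2\,d(T_{\ast}\gamma)(\by)$. Hence it suffices to construct a continuous distribution $\nu$ on $\reals^d$ — we then set $\gamma=(T^{-1})_{\ast}\nu$, which is continuous since $T^{-1}$ is an affine bijection — such that no 2-layer network of width at most $c_3\exp(c_4 d)$ approximates $\by\mapsto\ind{\norm{\by}\le1}$ to squared $L_2(\nu)$-error below $c_2/d^4$.

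Next I write $g_{\star}$ as a short combination of rescaled unit-ball indicators. The layer-cake identity gives $\varphi(\norm{\by})=\varphi(R)-\int_0^{R}\ind{\norm{\by}<t}\,\varphi'(t)\,dt$, where $\varphi(R)$ is a constant (hence a trivial network); discretizing the integral at $N$ equally spaced radii $t_1,\dots,t_N\in(0,R]$ yields $\sum_{k=1}^{N}\alpha_k\,\ind{\norm{\by}<t_k}$ with $\sum_k|\alpha_k|\le\mathrm{TV}(\varphi)=\mathrm{poly}(d)$, the $L_\infty$ discretization error being below any prescribed constant $\epsilon_0$ once $N=\mathrm{poly}(d)$ is large enough (using that $\varphi'$ is bounded and piecewise-nice). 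The key point is that $\ind{\norm{\by}<t_k}=\ind{\norm{\by/t_k}<1}$: if $\hat{g}$ is a 2-layer network of width $W$ approximating $\ind{\norm{\cdot}\le1}$, then $\by\mapsto\hat{g}(\by/t_k)$ is again a width-$W$ 2-layer network (the factor $1/t_k$ is absorbed into the first-layer weights), so $\sum_k\alpha_k\,\hat{g}(\by/t_k)$ is a single 2-layer network of width at most $NW$. To control the error uniformly over the $N$ scales with only one distribution, I take $\nu$ to be the mixture $\nu=\tfrac1N\sum_{k=1}^{N}(S_{1/t_k})_{\ast}\mu$ where $S_{1/t_k}(\by)=\by/t_k$; this is continuous, and if $\hat{g}$ has squared $L_2(\nu)$-error below $c_2/d^4$ from $\ind{\norm{\cdot}\le1}$, then the change of variables $\bz=\by/t_k$ gives $\E_{\by\sim\mu}[(\ind{\norm{\by}<t_k}-\hat{g}(\by/t_k))^2]\le N\cdot c_2/d^4$ for every $k$.

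Combining these, the triangle inequality in $L_2(\mu)$ bounds the distance from $g_{\star}$ to the 2-layer network $\sum_k\alpha_k\,\hat{g}(\cdot/t_k)$ by $\sum_k|\alpha_k|\sqrt{N c_2/d^4}+\epsilon_0\le\mathrm{TV}(\varphi)\sqrt{N c_2/d^4}+\epsilon_0$, so its squared $L_2(\mu)$-error is at most $\mathrm{TV}(\varphi)^2\cdot N\cdot c_2/d^4+\Ocal(\epsilon_0)$. Using the quantitative bounds on $\mathrm{TV}(\varphi)$ and on the number of discretization pieces $N$ from \citet{eldan2016power}, the product $\mathrm{TV}(\varphi)^2 N$ is $\Ocal(d^4)$, which is exactly why the target bound is of order $1/d^4$: choose $c_1$ large enough that the ``$\mathrm{poly}(d)$ vs.\ constant'' comparisons hold for $d>c_1$, choose $\epsilon_0$ and then $c_2$ small enough that $\mathrm{TV}(\varphi)^2 N c_2/d^4+\Ocal(\epsilon_0)<c'$, and choose $c_3,c_4$ small enough that $N\cdot c_3\exp(c_4 d)\le\exp(c\,d)$ for $d>c_1$. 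Then, assuming for contradiction that some 2-layer network of width $\le c_3\exp(c_4 d)$ approximates $\ind{\norm{\cdot}\le1}$ to squared $L_2(\nu)$-error below $c_2/d^4$, the network $\sum_k\alpha_k\,\hat{g}(\cdot/t_k)$ has width $\le\exp(c\,d)$ and squared $L_2(\mu)$-error below $c'$ from $g_{\star}$, contradicting \citet{eldan2016power}; undoing the affine reduction via $\gamma=(T^{-1})_{\ast}\nu$ finishes the proof.

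I expect the main obstacle to be the quantitative accounting tying the last two steps together: the reduction loses a factor $N$ (the mixture over scales) and a factor $\mathrm{TV}(\varphi)^2$ (summing $N$ separate approximations), so one must confirm that the hard instance of \citet{eldan2016power} is quantitatively favorable enough — $\varphi$ Lipschitz with $\mathrm{poly}(d)$ constant, discretizable with $\mathrm{poly}(d)$ pieces, and its lower-bound constant $c'$ not shrinking with $d$ — and then track the exponents to verify that $\mathrm{TV}(\varphi)^2 N\cdot(c_2/d^4)$ stays a controllable constant while the width grows only by a $\mathrm{poly}(d)$ factor. A worse polynomial balance here would merely weaken the $1/d^4$ in the statement, not break the argument; a secondary, routine point is checking that each rescaled copy $\hat{g}(\cdot/t_k)$ really is a legal 2-layer network for the given activation and that the additive and linear-output parts of the layer-cake decomposition cost only $\Ocal(1)$ extra neurons.
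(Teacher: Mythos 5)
Your overall strategy is the same reduction as the paper's -- write the hard radial function of \citet{eldan2016power} as a short combination of ball indicators at radii $t_1,\dots,t_N$, turn a hypothetical good approximator of the unit-ball indicator into approximators at every radius by rescaling the first-layer weights, recombine, and contradict the exponential lower bound -- and your affine pushforward step and the rescaling-of-networks step are fine. The genuine difference is how the quantifiers are handled: you build one explicit distribution (a uniform mixture of scaled copies of $\mu$), whereas the paper argues through the contrapositive (``for every distribution there is a good small network'') and applies that hypothesis to a \emph{separate} rescaled distribution for each radius, so each radius gets the full accuracy $\epsilon$ with no loss. Your mixture is where the gap appears: passing from error $\le \epsilon$ under $\nu=\frac1N\sum_k (S_{1/t_k})_{\ast}\mu$ to a per-scale bound costs a factor $N$, giving $e_k^2\le N\epsilon$ for each $k$, and your final bound is $\p{\sum_k|\alpha_k|}\sqrt{N\epsilon}$. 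Your claim that $\mathrm{TV}(\varphi)^2 N=\Ocal(d^4)$ is not correct for the actual hard instance: as used in this paper, $\tilde g$ is exactly a $\pm1$ combination of $n=\Theta(d^2)$ annulus indicators (it is piecewise constant, not Lipschitz -- so your layer-cake discretization and the $\epsilon_0$ term are unnecessary, but also cannot shrink the parameters), hence $\sum_k|\alpha_k|=\Theta(d^2)$ and $N=\Theta(d^2)$, so $\mathrm{TV}^2 N=\Theta(d^6)$. With $\epsilon=c_2/d^4$ your combined error is $\Theta(d^2)\cdot\sqrt{d^2\cdot c_2/d^4}=\Theta(d\sqrt{c_2})$, which does not contradict the constant-accuracy lower bound; as written you only prove a $c_2/d^6$ statement, weaker than the theorem.

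The gap is repairable within your framework, in two ways. Either replace the crude per-scale bound by Cauchy--Schwarz across scales: the mixture constraint is $\frac1N\sum_k e_k^2\le\epsilon$, so $\sum_k|\alpha_k|e_k\le\sqrt{\sum_k\alpha_k^2}\cdot\sqrt{\sum_k e_k^2}\le N\sqrt{\epsilon}=\mathrm{TV}\sqrt{\epsilon}$ when $|\alpha_k|=1$, which recovers the $1/d^4$ rate; or weight the mixture proportionally to $|\alpha_k|$, which achieves the same via $\sum_k|\alpha_k|e_k\le\sqrt{\sum_k\alpha_k^2/w_k}\sqrt{\sum_k w_k e_k^2}$. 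The paper sidesteps the issue entirely by letting the distribution depend on the radius inside the proof by contradiction, at the cost of not exhibiting an explicit $\gamma$; your mixture construction, once the accounting is fixed, actually yields a single explicit distribution, which is a mild strengthening worth keeping. A final small point to verify is that the constant offset in your decomposition can be realized within the network model used here (the output layer has no bias), which the paper's exact annulus decomposition avoids needing.
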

	We note that the assumptions from \cite{eldan2016power} are very mild, 
	and apply to all 
	standard activation functions, including ReLU, sigmoid and threshold. 
	
	The formal proof of \thmref{thm:ellipsoidlowbound} (provided below) 
	is based on a reduction from 
	the main result of \cite{eldan2016power}, which shows the existence of a 
	certain radial function (depending on the input $\bx$ only through its 
	norm) and a probability distribution which cannot be expressed by a 2-layer 
	network, whose width is less than exponential in the dimension $d$ to more 
	than constant accuracy. A closer look at the proof reveals that this 
	function (denoted as $\tilde{g}$) can be expressed as a sum of 
	$\Theta(d^2)$ 
	indicators of $L_2$ balls of various radii. We argue that if we could have 
	accurately approximated a given $L_2$ ball indicator with respect to all 
	distributions, then we could have approximated all the indicators whose sum 
	add up to $\tilde{g}$, and hence reach a contradiction. By a linear 
	transformation argument, we show the same contradiction would have occured 
	if we could have approximated the indicators of an non-degenerate ellipse 
	with respect to any distribution. The formal proof is provided below:
	
	\begin{proof}[Proof of \thmref{thm:ellipsoidlowbound}]
		Assume by contradiction that for $f$ as described in the theorem, and for 
	any distribution $\gamma$, there exists a two-layer network 
	$\tilde{f}_{\gamma}$ of width at most $c_3\exp(c_4d)$, such that
	\[
	\int_{\bx\in\bbr^d} \p{f(\bx)-\tilde{f}_{\gamma}(\bx)}^2\gamma(\bx)d\bx
	~\le~
	\epsilon~\le \frac{c_2}{d^4}.
	\]
	
	Let $\hat{A}$ and $\hat{\bb}$ be a $d\times d$ non-singular matrix and 
	vector respectively, to be determined later. We 
	begin by performing a change of variables, $\by= \hat{A}\bx+\hat{\bb}\iff 
	\bx=\hat{A}^{-1}(\by-\hat{\bb})$, $ d\bx=\abs{\det\p{\hat{A}^{-1}}}\cdot 
	d\by $, 
	which yields
	\begin{align}
		\int_{\by\in\reals^d} &
		\p{f\p{\hat{A}^{-1}\p{\by-\hat{\bb}}}-\tilde{f}_{\gamma}\p{\hat{A}^{-1}\p{\by-\hat{\bb}}}}^2\notag\\
		&\cdot\gamma\p{\hat{A}^{-1}\p{\by-\hat{\bb}}}\cdot
		\abs{\det\p{\hat{A}^{-1}}}\cdot d\by~\le~ \epsilon.\label{eq:gamma_change_of_var}
	\end{align}
	
	In particular, let us choose the distribution $\gamma$ defined as $ 
	\gamma(\bz) = |\det(\hat{A})|\cdot 
	\mu(\hat{A}\bz+\hat{\bb})$, where $\mu$ is the (continuous) distribution 
	used in the 
	main 
	result of \cite{eldan2016power}  (note that $ \gamma $ is indeed a 
	distribution, since 
	$\int_{\bz}\gamma\p{\bz}=(\det(\hat{A}))\int_{\bz} 
	\mu(\hat{A}\bz+\hat{\bb})d\bz$, which by the change of variables $ 
	\bx=\hat{A}\bz+\hat{\bb} 
	$, $d\bx=|\det(\hat{A})|d\bz $ equals 
	$\int_{\bx}\mu(\bx)d\bx=1$). 
	Plugging the definition of $ \gamma $ in \eqref{eq:gamma_change_of_var}, 
	and using the fact that 
	$|\det(\hat{A}^{-1})|\cdot|\det(\hat{A})| = 
	1$, we 
	get
	\begin{align}
		\int_{\by\in\reals^d} 
		&\p{f\p{\hat{A}^{-1}\p{\by-\hat{\bb}}}-\tilde{f}_{\gamma}\p{\hat{A}^{-1}\p{\by-\hat{\bb}}}}^2\notag\\
		&~~~~~~~~~~~~~~~~~~~~~~~~~~~~~~~~~~~~~~\cdot\mu\p{\by}d\by~\le~ \epsilon.
	\end{align}
	Letting $z>0$ be an arbitrary parameter, we now pick
	$\hat{A}=\frac{z}{r}A$ and $\hat{\bb} = 
	\frac{z}{r}\bb$. Recalling the definition of $f$ as 
	$\bx\mapsto\ind{\norm{A\bx+\bb}\leq r}$, we get that
	\begin{align}
	\int_{\by\in\bbr^d} 
	&\p{\ind{\norm{\by}\leq 
			z}-\tilde{f}_{\gamma}\p{\frac{r}{z}A^{-1}
			\p{\by-\frac{z}{r}\bb}}}^2\notag\\
	&~~~~~~~~~~~~~~~~~~~~~~~~~~~~~~~~~~~~~~\cdot\mu\p{\by}d\by~\le~ \epsilon.	
	\end{align}
	Note that 
	$\tilde{f}_{\gamma}\p{\frac{r}{z}A^{-1}\p{y-\frac{z}{r}\bb}}$ 
	expresses a 
	2-layer network composed with a linear transformation of the input, and 
	hence 
	can be expressed in turn by a 2-layer network (as we can absorb the linear 
	transformation into the parameters of each neuron in the first layer). 
	Therefore, letting 
	$\norm{f}_{L_2(\mu)}=\sqrt{\int_{\by}f^2(\by)d\by}$ denote the norm in 
	$L_2(\mu)$ function space, we showed the following: For 
	any $z>0$, there exists a 2-layer network $\tilde{f}_{z}$ such 
	that 
	\begin{equation}\label{eq:keyapprx}
		\norm{
			\p{\ind{\norm{\cdot}\leq 
					z}-\tilde{f}_{z}\p{\cdot}}}_{L_2(\mu)}~\leq~\sqrt{\epsilon}. 
	\end{equation}
	With this key result in hand, we now turn to complete the proof. We 
	consider the function $\tilde{g}$ from \cite{eldan2016power}, for which it 
	was proven that no 2-layer network can approximate it w.r.t. $\mu$ to 
	better than constant accuracy, unless its width is exponential in the 
	dimension $d$. In particular $\tilde{g}$ can be written as
	\[
	\tilde{g}(\bx) = \sum_{i=1}^{n}\epsilon_i \cdot\ind{\norm{\bx}\in 
		[a_i,b_i]},
	\]
	where $[a_i,b_i]$ are disjoint intervals, $\epsilon_i\in \{-1,+1\}$, and 
	$n=\Theta(d^2)$ where $d$ is the dimension. Since $\tilde{g}$ can also be 
	written as
	\[
	\sum_{i=1}^{n}\epsilon_i 
	\p{\ind{\norm{\bx}\leq b_i}-\ind{\norm{\bx}\leq 
			a_i}},
	\]
	we get by \eqref{eq:keyapprx} and the triangle inequality that
	\begin{align*}
		&\norm{\tilde{g}(\cdot)-\sum_{i=1}^{n}\epsilon_i\cdot 
			(\tilde{f}_{b_i}(\cdot)-\tilde{f}_{a_i}(\cdot)}_{L_2(\mu)}\\
		&~\leq~ \sum_{i=1}^{n}|\epsilon_i|\left(\norm{\p{\ind{\norm{\cdot}\leq 
						b_i}-\tilde{f}_{b_i}}}_{L_2(\mu)}\right.\\
		&~~~~~~~~~~~~~~~~~~~~~~~~+\left.\norm{\ind{\norm{\cdot}\leq 
					a_i}-\tilde{f}_{a_i}(\cdot)}_{L_2(\mu)}\right)\\
		&~\leq~
		2n\sqrt{\epsilon}.
	\end{align*}
	However, since a linear combination of $2n$ 2-layer neural networks of 
	width at most $w$ is still a 2-layer network, of width at most $2nw$, we 
	get that
	$\sum_{i=1}^{n}\epsilon_i\cdot 
	(\tilde{f}_{b_i}(\cdot)-\tilde{f}_{a_i}(\cdot))$ is a 2-layer network, of 
	width at most $\Theta(d^2)\cdot c_3\exp(c_4 d)$, which approximates 
	$\tilde{g}$ to an accuracy of less than 
	$2n\sqrt{\epsilon}=\Theta(d^2)\cdot \sqrt{c_2/d^4} = \Theta(1)\cdot 
	\sqrt{c_2}$. Hence, by picking $c_2,c_3,c_4$ sufficiently small, we get a 
	contradiction to the result of \cite{eldan2016power}, that no 2-layer 
	network of width smaller than $c\exp(cd)$ (for some constant $c$) can 
	approximate $\tilde{g}$ to more than constant accuracy, for a sufficiently 
	large dimension $d$. 
	\end{proof}

	To complement \thmref{thm:ellipsoidlowbound}, we also show that such 
	indicator functions can be easily 
	approximated with 3-layer networks. The argument is quite simple: Using an 
	activation such as ReLU or Sigmoid, we can use one layer to approximate any 
	Lipschitz continuous function on any bounded interval, and in particular 
	$x\mapsto x^2$. Given a vector $\bx\in \reals^d$, we can apply this 
	construction on each coordinate $x_i$ seperately, hence approximating 
	$\bx\mapsto \norm{\bx}^2=\sum_{i=1}^{d}x_i^2$. Similarly, we can 
	approximate $\bx\mapsto \norm{A\bx+\bb}$ for arbitrary fixed matrices $A$ and 
	vectors $\bb$. Finally, with a 3-layer network, we can use the second layer 
	to compute a continuous approximation to the threshold function $z\mapsto 
	\ind{z\leq r}$. Composing these two layers, we get an arbitrarily good 
	approximation to 
	the function $\bx\mapsto \ind{\norm{A\bx+\bb}\le r}$ w.r.t. any continuous 
	distribution, with the network size scaling polynomially with the dimension 
	$d$ and the required accuracy. In the theorem below, we formalize this 
	intuition, where for simplicity we focus on approximating the indicator of 
	the unit ball:	
	\begin{theorem}[Approximability with 3-layer 
	networks]\label{thm:ellipsoidupbound}
		Given $ \delta>0 $, for any activation function $ \sigma $ satisfying 
		Assumption 1 in \citet{eldan2016power} and any continuous probability 
		distribution $ \mu 
		$ on $ \bbr^d $, there exists a constant $c_{\sigma}$ dependent only on $\sigma$, a constant $c_{\mu}$ dependent only on $\mu$ and a function $ g $ expressible by a 3-layer network of 
		width at most $ 
		\frac{2c_{\sigma}}{\sqrt{\delta}}\cdot \max\set{2c_{\mu}d^2,1} $, such that the following holds:
		\begin{equation*}
			\int_{\bbr^d}\p{g\p{\bx}-\ind{\norm{\bx}_2\le 1}}^2\mu\p{\bx}d\bx ~\le~ \delta.
		\end{equation*}	
	\end{theorem}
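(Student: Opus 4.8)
The plan is to build $g$ from two ingredients: use the first hidden layer to compute an (approximate) surrogate for $\bx\mapsto\norm{\bx}^2$, and use the second hidden layer to apply a sharp soft threshold at level $1$. Throughout I use Assumption 1 of \citet{eldan2016power} in the form: for a constant $c_\sigma$ depending only on $\sigma$, a single hidden layer of width at most $c_\sigma RL/\epsilon$ approximates, uniformly on $\reals$, any $L$-Lipschitz univariate function that is constant outside $[-R,R]$, up to additive error $\epsilon$.

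\emph{The surrogate.} As $x\mapsto x^2$ is neither Lipschitz nor eventually constant, I replace it by the clipped square $\phi(x)=x^2$ for $\abs{x}\le2$ and $\phi(x)=4$ for $\abs{x}\ge2$, which is $4$-Lipschitz and constant outside $[-2,2]$. For each coordinate $i=1,\dots,d$, Assumption 1 yields a one-hidden-layer combination $h_i$ of at most $8c_\sigma/\epsilon_1$ activations with $\sup_{x_i}\abs{h_i(x_i)-\phi(x_i)}\le\epsilon_1$; stacking these produces a first layer of width $\le8c_\sigma d/\epsilon_1$ whose linear readout $\tilde T(\bx):=\sum_{i=1}^d h_i(x_i)$ obeys $\abs{\tilde T(\bx)-T(\bx)}\le d\epsilon_1$, where $T(\bx):=\sum_{i=1}^d\phi(x_i)$. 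The value of clipping is that $T$ is \emph{exactly} faithful to the target: if $\norm{\bx}\le1$ then every $\abs{x_i}\le1$, so $T(\bx)=\norm{\bx}^2\le1$; and if $\norm{\bx}>1$ then either some $\abs{x_i}>2$, forcing $T(\bx)\ge4>1$, or all $\abs{x_i}\le2$, giving $T(\bx)=\norm{\bx}^2>1$. Hence $\ind{T(\bx)\le1}=\ind{\norm{\bx}\le1}=:f(\bx)$ identically, so the clipping costs no error.

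\emph{The threshold layer.} Fix a small $\eta>0$ and let $\psi\colon\reals\to[0,1]$ be the piecewise-linear ramp equal to $1$ on $(-\infty,-\eta]$, equal to $0$ on $[\eta,\infty)$, and linear in between; it is $\tfrac1{2\eta}$-Lipschitz and constant outside $[-\eta,\eta]$, so Assumption 1 gives $\psi_{\mathrm{net}}$ with $\sup\abs{\psi_{\mathrm{net}}-\psi}\le\epsilon_2$ of width at most $c_\sigma\cdot\eta\cdot\tfrac1{2\eta}/\epsilon_2=c_\sigma/(2\epsilon_2)$ --- crucially \emph{independent of $\eta$}, since shrinking the ramp raises its Lipschitz constant but shrinks its support by the same factor. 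Set $g(\bx):=\psi_{\mathrm{net}}\big(\tilde T(\bx)-1-d\epsilon_1-\eta\big)$, with the shift absorbed into the second-layer biases; $g$ is a genuine $3$-layer network (layer one: the neurons defining the $h_i$; layer two: the neurons of $\psi_{\mathrm{net}}$ applied to the affine combination $\tilde T(\bx)-1-d\epsilon_1-\eta$ of first-layer outputs; output: a linear readout, the stray additive constant of Assumption 1 being absorbed as usual). The offset $1+d\epsilon_1+\eta$ is chosen so that all the slack from $\tilde T\ne T$ is pushed outward: if $\norm{\bx}\le1$ then $\tilde T(\bx)-1-d\epsilon_1-\eta\le(T(\bx)-1)-\eta\le-\eta$, hence $g(\bx)\in[1-\epsilon_2,1+\epsilon_2]$, matching $f(\bx)=1$; and if $\norm{\bx}^2\ge1+2d\epsilon_1+2\eta$ (or $\bx\notin[-2,2]^d$, where $T\ge4$) the same estimate with the inequalities reversed gives $g(\bx)\in[-\epsilon_2,\epsilon_2]$, matching $f(\bx)=0$. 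Thus $\abs{g(\bx)-f(\bx)}\le\epsilon_2$ outside the shell $S:=\{\bx:1<\norm{\bx}^2<1+2d\epsilon_1+2\eta\}$ and $\le1+\epsilon_2$ inside it, so
\[
\int_{\reals^d}\p{g(\bx)-f(\bx)}^2\mu(\bx)\,d\bx\;\le\;\epsilon_2^2\;+\;(1+\epsilon_2)^2\,\mu(S).
\]

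\emph{Choosing the parameters.} Because $\mu$ is a continuous distribution, $\mu(\{\norm{\bx}^2=1\})=0$, so $\mu(S)\to0$ as $\epsilon_1,\eta\to0$; take $\epsilon_1,\eta$ small enough that $(1+\epsilon_2)^2\mu(S)\le\delta/2$, and take $\epsilon_2=\sqrt{\delta/2}$ so that $\epsilon_2^2\le\delta/2$ --- this is exactly where the square root appears, since attaining $L_2$-error $\delta$ only requires $L_\infty$-accuracy of order $\sqrt\delta$ in the threshold. With $\epsilon_1$ of order $\delta/d$ (and $\eta$ as small as continuity of $\mu$ requires, which does not hurt the widths) the first layer has width $8c_\sigma d/\epsilon_1=8c_\sigma d^2/\delta$ and the second layer has width $c_\sigma/(2\epsilon_2)=c_\sigma/\sqrt{2\delta}=c_\sigma\sqrt{1/(2\delta)}$; the network width, being the maximum over layers, is then at most $\max\{8c_\sigma d^2/\delta,\,c_\sigma\sqrt{1/2\delta}\}$, as claimed. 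The one genuine obstacle is conceptual rather than computational: $f$ is discontinuous, whereas any network with a continuous activation computes a continuous function, so the approximation cannot hold in sup norm; the whole argument relies on being in the $L_2(\mu)$ setting and on the fact that a thin spherical shell carries vanishing $\mu$-mass --- which is where, and the only place where, continuity of $\mu$ is used, and the place where one must take a little care in balancing $\eta,\epsilon_1,\epsilon_2$.
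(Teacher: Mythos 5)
Your construction is essentially the paper's own proof: the same clipped square $\min\set{x^2,4}$ summed coordinate-wise via Assumption 1 (first hidden layer of width $8c_\sigma d^2/\delta$), the same soft threshold realized by Assumption 1 in the second hidden layer (width $c_\sigma\sqrt{1/2\delta}$), and the same decomposition of the $L_2(\mu)$ error into a thin shell around the unit sphere (negligible by continuity of $\mu$) plus a uniform $\sqrt{\delta/2}$-accuracy term elsewhere, with your shrinking of the ramp width $\eta$ being equivalent to the paper's rescaling of a fixed ramp by $c_\mu=1/\epsilon$. The only caveat is that your final balancing tacitly assumes the shell $\set{\bx:1<\norm{\bx}^2<1+2d\epsilon_1+2\eta}$ with $d\epsilon_1\approx\delta$ has $\mu$-mass $O(\delta)$, which is precisely the same implicit assumption the paper makes in its ``assuming $\delta<\epsilon/2$'' step, so your argument matches the paper's both in route and in level of rigor.
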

	The proof of the theorem appears in \subsecref{subsec:proofthml2ubound}.
	
	
	\subsection{An Experiment}\label{sec:experiments}
		In this subsection, we empirically demonstrate that indicator functions 
		of balls are
		indeed easier to learn with a 3-layer network, compared to a 2-layer 
		network (even if the 2-layer network is significantly larger). This
		indicates that the depth/width trade-off for indicators of balls, 
		predicted 
		by our theory, can indeed be observed experimentally. Moreover, it	
		highlights the fact that our separation result is for simple natural 
		functions, that can be learned reasonably well from data using standard 
		methods.
		
		For our experiment, we sampled $ 5\cdot10^5 $ data instances in 
		$\reals^{100}$, with a direction chosen uniformly at random and a norm 
		drawn uniformly at random from the interval $ \pcc{0,2} $. To each 
		instance, we associated a target value computed according to the target 
		function $ f(\bx)=\ind{\norm{\bx}_2\le1} $. Another $5\cdot 10^4$ 
		examples were generated in a similar manner and used as a validation 
		set.
		
		We trained $5$ ReLU networks on this dataset: 
		\begin{itemize}
		\item One 3-layer network, with a first hidden layer of size $100$, 
		a second hidden layer of size $20$, and a linear output neuron. 
		\item Four 2-layer networks, with hidden layer of sizes $100,200,400$ 
		and $800$, and a linear output neuron. 
		\end{itemize}
		Training was performed with backpropagation, using the TensorFlow 
		library. We used the 
		squared loss $\ell(y,y')=(y-y')^2$ and batches of size  
		100. For all networks, we chose a momentum parameter of 0.95, and a
		learning rate starting at 0.1, decaying by a multiplicative factor of 
		0.95 every 1000 batches, and stopping at $ 10^{-4} $.

		\begin{figure}[t]
				\centering
				\includegraphics[scale=0.4]{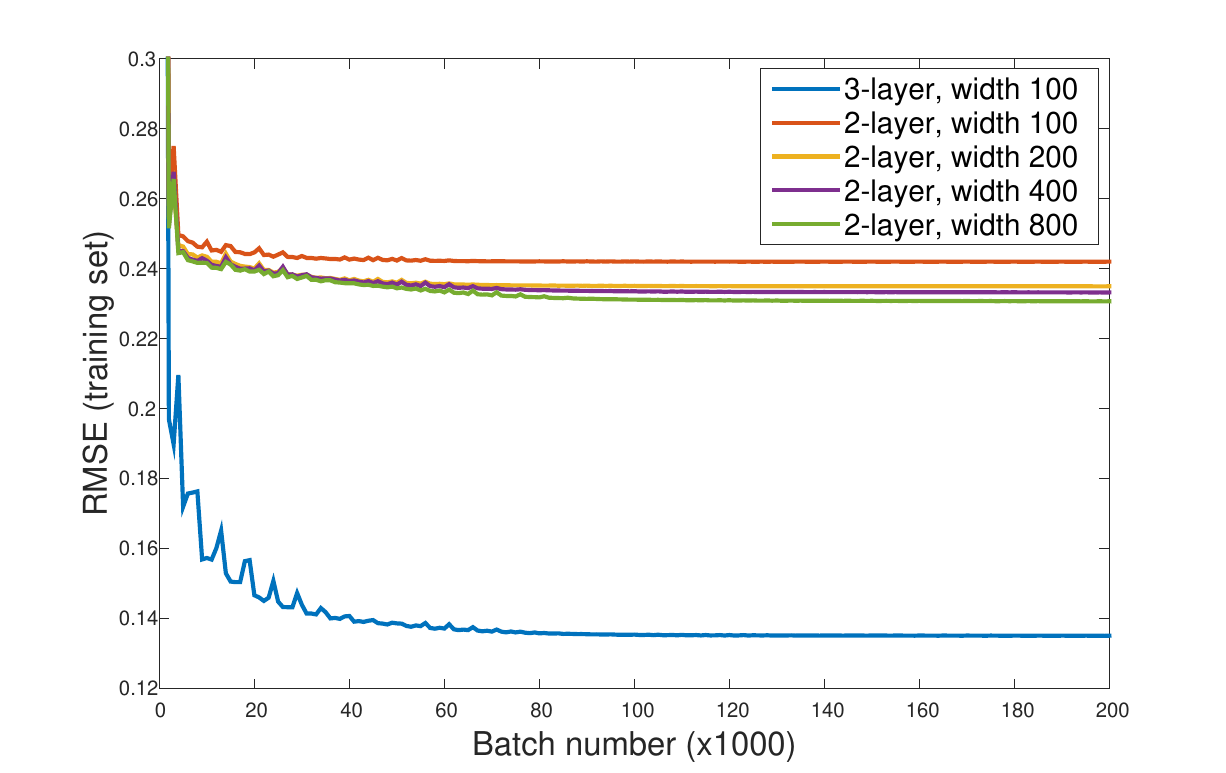}
				\includegraphics[scale=0.4]{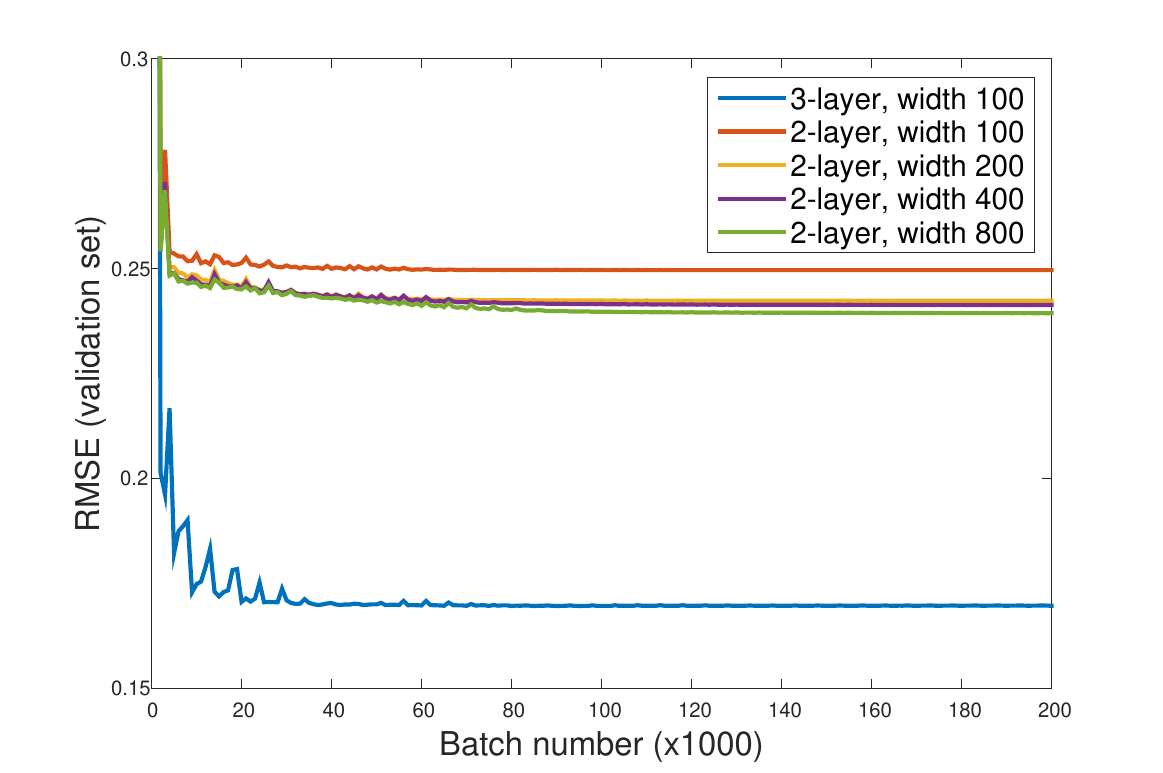}		
				\caption{\footnotesize The experiment results, depicting the 
				network's root mean square error over the training set (left) 
				and validation set (right), as a function of 
				the number of batches processed. Best viewed in color.}
				\label{fig:experiment}
		\par\end{figure}
		
		The results are presented in \figref{fig:experiment}. As can be clearly 
		seen, the 3-layer network achieves significantly better performance 
		than the 2-layer networks. This is true even though some of these 
		networks are significantly larger and with more parameters (for 
		example, the 2-layer, width 800 network has 
		\textasciitilde80K 
		parameters, vs. 
		\textasciitilde10K 
		parameters for the 3-layer network). This gap in performance is the 
		exact opposite of what might be expected based on parameter counting 
		alone. Moreover, increasing the width of the 2-layer networks exhibits 
		diminishing returns: The performance improvement in doubling the width 
		from 100 to 200 is much larger than doubling the width from 200 to 400 
		or 400 to 800. This indicates that one would need a much larger 
		2-layer network to match the 3-layer, width 100 network's performance. 
		Thus, we conclude that the network's depth indeed plays a crucial role, 
		and that 3-layer networks are inherently more suitable to express 
		indicator functions of the type we studied. 

			
		
	\section{$L_1$ Radial Functions; ReLU Networks}
	
	Having considered functions depending on the $L_2$ norm, we now turn to 
	consider functions depending on the $L_1$ norm. Focusing on ReLU networks, 
	we will show a certain separation result holding for
	\emph{any} non-linear function, which depends on the input $\bx$ only via 
	its 1-norm
	$\norm{\bx}_1$. 
	
	\begin{theorem}\label{thm:L1}
	Let $f:[0,\infty)\mapsto\reals$ be a function such that for some 
	$r,\delta>0$ 
	and
	$\epsilon\in (0,1/2)$, 
	\[
	\inf_{a,b\in \reals}\E_{x~\text{uniform on}~ 
	[r,(1+\epsilon)r]}[(f(x)-(ax-b))^2]> \delta~.
	\]
	Then there exists a distribution $ \gamma $ over $\{\bx:\norm{\bx}_1\leq 
	(1+\epsilon)r\}$, 
	such that if a 2-layer ReLU network $F(\bx)$ satisfies
	\[
	\int_{\bx}\left(f(\norm{\bx}_1)-F(\bx)\right)^2\gamma(\bx)d\bx ~\leq~ \delta/2,
	\]
	then its width must be at least
	$\tilde{\Omega}(\min\left\{1/\epsilon,\exp(\Omega(d))\right\})$ (where the 
	$\tilde{\Omega}$ notation hides 
	constants and factors logarithmic 
	in $\epsilon,d$).
	\end{theorem}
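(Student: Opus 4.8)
The plan is to take $\gamma$ to be a mixture of uniform measures on a large family of short line segments (``spokes'') and to extract the width bound from a piece‑counting argument. Concretely, fix anchor points $\bx_1,\dots,\bx_N$ with $\norm{\bx_j}_1=r$ and all coordinates nonzero, spread roughly evenly over all $2^d$ facets of the radius‑$r$ cross‑polytope, and for each $j$ let $S_j=\{\bx_j+t\,\sign(\bx_j):t\in[0,\epsilon r/d]\}$ — the segment that stays inside the orthant of $\bx_j$ and along which $\norm{\bx}_1$ increases affinely from $r$ to $(1+\epsilon)r$. Set $\gamma=\frac1N\sum_j(\text{uniform on }S_j)$, which is supported in $\{\norm{\bx}_1\le(1+\epsilon)r\}$. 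Under the linear reparametrization $t\mapsto\bx_j+t\,\sign(\bx_j)$, the restriction of $\bx\mapsto f(\norm{\bx}_1)$ to $S_j$ is an affine reparametrization of $f$ on $[r,(1+\epsilon)r]$ against the uniform measure, so by hypothesis the squared $L_2(S_j)$ distance from $f(\norm{\cdot}_1)$ to any function that is affine on $S_j$ exceeds $\delta$.

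Next I would use that a width‑$n$ $2$‑layer ReLU network $F(\bx)=\sum_{i=1}^n v_i[\bw_i^\top\bx+b_i]_+$, restricted to the line through any $S_j$, is piecewise affine with at most $n$ breakpoints, one for each neuron whose hyperplane $H_i:=\{\bw_i^\top\bx+b_i=0\}$ crosses the interior of $S_j$. If the $L_2(\gamma)$ error of $F$ is at most $\delta/2$, then by Markov at least $N/2$ of the spokes carry squared error $\le\delta$, and on each such spoke $F$ cannot be affine, so at least one of the $n$ hyperplanes crosses it. Letting $M=\max_{\bw,b}\#\{j:\{\bw^\top\bx+b=0\}\text{ crosses the interior of }S_j\}$ and counting $(\text{hyperplane},\text{spoke})$ incidences two ways,
\[
\tfrac{N}{2}\;\le\;\#\{(i,j):H_i\text{ crosses }S_j\}\;\le\;nM,
\]
so $n\ge N/(2M)$; the theorem then reduces to choosing the anchors so that $M=\tilde{O}\big((\epsilon+2^{-d})N\big)$, which yields $n=\tilde{O}\big(1/(\epsilon+2^{-d})\big)^{-1}=\tilde\Omega\big(\min\{1/\epsilon,\exp(\Omega(d))\}\big)$.

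For the bound on $M$: a hyperplane $\{\bw^\top\bx+b=0\}$ crosses the interior of $S_j$ exactly when $\bw^\top\bx_j$ lands in an interval of length $\le(\epsilon r/d)\,|\bw^\top\sign(\bx_j)|\le\epsilon r\norm{\bw}_\infty$ near $-b$, while $\bw^\top\bx_j$ itself ranges over $[-r\norm{\bw}_\infty,r\norm{\bw}_\infty]$; thus $M$ is at most the largest number of anchors whose images under the functional $\bx\mapsto\bw^\top\bx$ land in a window of relative length $O(\epsilon)$ — a discrepancy/anti‑concentration quantity in the anchor set. One then has to argue this is $\tilde{O}(\epsilon N)$ for every $\bw$ \emph{except} when $\bw$ is proportional to a sign vector $\bsigma^\star$: on the single facet whose outward normal is $\bsigma^\star$ all anchors share the same value of $\bw^\top(\cdot)$, but only $\tilde{O}(N/2^d)$ anchors live there, and on a facet whose normal is at Hamming distance $k$ from $\bsigma^\star$ the crossing window has length proportional to $|1-2k/d|$ while the spoke direction there is nearly orthogonal to $\bw$ — so the balanced facets ($k\approx d/2$), which hold the bulk of the anchors, contribute negligibly. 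Summing the per‑facet contributions gives $M=\tilde{O}\big((\epsilon+2^{-d})N\big)$.

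The hard part is precisely this last step: the anti‑concentration bound on $M$ must hold \emph{uniformly over all} hyperplanes $(\bw,b)$ — in particular for spiky, low‑support $\bw$ — and it must be reconciled with the fact that the uniform measure on a cross‑polytope facet is itself concentrated along coordinate directions. The cleanest route is probably not to exhibit a good anchor set by hand, but to draw the anchors at random and union‑bound over an $\epsilon$‑net of the width‑$n$ networks (whose pseudo‑dimension on $\reals^d$ is $\tilde{O}(nd)$), concluding that with high probability every network of width $o\big(\min\{1/\epsilon,\exp(\Omega(d))\}\big)$ incurs error $>\delta/2$ against the random $\gamma$; the irreducible $\tilde{O}(N/2^d)$ contribution from the facet‑aligned directions is what both creates and caps the exponential‑in‑$d$ term, explaining the $\min$. (The matching positive direction is easy: if $f$ is piecewise linear with $k$ pieces, a $3$‑layer ReLU network with a first layer of $2d$ neurons forming $|x_1|,\dots,|x_d|$ and a second layer of $k+O(1)$ neurons thresholding $\norm{\bx}_1=\sum_i|x_i|$ represents $\bx\mapsto f(\norm{\bx}_1)$ exactly.)
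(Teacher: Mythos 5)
Your reduction is essentially the paper's: a distribution supported on radial/outward segments between the spheres $\norm{\bx}_1=r$ and $\norm{\bx}_1=(1+\epsilon)r$, Markov's inequality to get that on at least half of the segments the network incurs squared error at most $\delta$, the observation that on such a segment the piecewise-affine network cannot be affine and hence some neuron's hyperplane must cross it, and a union bound (your incidence double-count is the same thing) reducing the width bound to an anti-concentration statement: no single hyperplane may ``cover'' more than an $\tilde{\Ocal}(\epsilon+e^{-\Omega(d)})$ fraction of the segments. Up to this point the argument is correct and matches the paper.

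The genuine gap is exactly the step you flag as ``the hard part'': the uniform-over-all-hyperplanes bound $M=\tilde{\Ocal}\p{(\epsilon+2^{-d})N}$ (equivalently, $\sup_{\bw}\Pr_{\bv}\p{\inner{\bw,\bv}\in[1-\epsilon,1]}=\tilde{\Ocal}(\epsilon+e^{-\Omega(d)})$) is asserted and heuristically motivated but never proved, and it is the entire content of the theorem --- everything before it is routine. Your per-facet Hamming-distance heuristic only treats $\bw$ near a sign vector and does not handle arbitrary (e.g.\ sparse, spiky) $\bw$, and your proposed fix --- random anchors plus a union bound over a net of networks via pseudo-dimension --- does not close the gap, because such a transfer argument still presupposes the very distributional anti-concentration statement you are missing (and, as you note, the natural candidate, uniform measure on a facet, is itself problematic since it concentrates unfavorably). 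The paper resolves precisely this point by an explicit construction: draw a uniform sign pattern $\bsigma$, set $\hat{\bv}=\frac{1}{d}\p{\bsigma+c_d\p{I-\frac{1}{d}\bsigma\bsigma^\top}\bn}$ with Gaussian $\bn$ and $c_d=\Theta(1/\sqrt{\log d})$, and condition on $\hat{\bv}$ lying in the facet $F_{\bsigma}$ (which happens with probability at least $1/2$ by a Gaussian tail bound). Conditioned on $\bsigma$, $\inner{\bw,\hat{\bv}}$ is Gaussian with mean $\inner{\bsigma,\bw}/d$ and variance $\approx(c_d\norm{\bw}/d)^2$; Hoeffding over $\bsigma$ controls the mean-to-standard-deviation ratio, and a one-dimensional Gaussian anti-concentration lemma then gives $\Pr\p{\inner{\bw,\hat{\bv}}\in[1-\epsilon,1]}=\tilde{\Ocal}(\epsilon+e^{-\Omega(d)})$ uniformly in $\bw$. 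Without this (or an equivalent) construction and computation, your proof is incomplete; the rest of your outline, including the $3$-layer upper-bound remark, is fine.
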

	To give a concrete example, suppose that $f(z)=[z-1]_+$, which cannot be 
	approximated by a linear function better than 
	$\Ocal(\epsilon^2)$ in an $\epsilon$-neighborhood of $1$. By taking 
	$r=1-\frac{\epsilon}{2}$ and $\delta=\Ocal(\epsilon^2)$, we get that no 
	2-layer network can approximate the function $[\norm{\bx}_1-1]_+$ (at 
	least with respect to some distribution), unless its width is 
	$\tilde{\Omega}(\min\left\{1/\epsilon,\exp(\Omega(d))\right\})$. On the 
	flip side, $f(\norm{\bx}_1)$ can be expressed \emph{exactly} by a 3-layer, 
	width $2d$ ReLU network: $\bx\mapsto 
	[\sum_{i=1}^{d}([x_i]_++[-x_i]_+)-1]_+$, where the output neuron is simply 
	the identity function. The same argument would work for any 
	piecewise-linear $f$. More generally, the same kind of argument would work 
	for any function $f$ exhibiting a non-linear behavior at some points: Such 
	functions can be well-approximated by 3-layer networks (by approximating 
	$f$ with a piecewise-linear function), yet any 
	approximating 2-layer network will have a lower bound on its size as 
	specified in the theorem. 
	
	Intuitively, the proof relies on showing that any good $2$-layer 
	approximation 
	of $f(\norm{\bx}_1)$ must capture the non-linear behavior of $f$ close to 
	``most'' points $\bx$ satisfying $\norm{\bx}_1\approx r$. However, a 
	$2$-layer 
	ReLU 
	network $\bx\mapsto \sum_{j=1}^{N}a_j\left[\inner{\bw_j,\bx}+b_j\right]_+$ 
	is 
	piecewise linear, with non-linearities only at the union of the $N$ 
	hyperplanes $\cup_{j}\{\bx:\inner{\bw_j,\bx}+b_j=0\}$. This implies that 
	``most'' points $\bx$ s.t. $\norm{\bx}_1\approx r$ must be $\epsilon$-close 
	to 
	a 
	hyperplane 
	 $\{\bx:\inner{\bw_j,\bx}+b_j=0\}$. However, the geometry of the $L_1$ ball 
	 $\{\bx:\norm{\bx}=r\}$ is 
	 such that the $\epsilon$ neighborhood of any single hyperplane can only 
	 cover 
	 a ``small'' portion of that ball, yet we need to cover most of the $L_1$ 
	 ball. Using this and an appropriate construction, we show that required 
	 number 
	 of hyperplanes is at least $1/\epsilon$, as long as $\epsilon > 
	 \exp(-\Ocal(d))$ 
	 (and if $\epsilon$ is smaller than that, we can simply use one 
	 neuron/hyperplane for each  of the $2^d$ facets of the $L_1$ ball, and get 
	 a 
	 covering using $2^d$ 
	 neurons/hyperplanes). The formal proof appears in \subsecref{subsec:proofthml1}.
	 
	 We note that the bound in \thmref{thm:L1} is of a weaker nature than the 
	 bound in the previous section, in that the lower bound is only polynomial 
	 rather than exponential (albeit w.r.t. different problem parameters: 
	 $\epsilon$ vs. $d$). Nevertheless, we believe this does point out that 
	 $L_1$ balls also pose a geometric difficulty for 2-layer networks, and 
	 conjecture that our lower bound can be considerably improved: Indeed, at 
	 the moment we do not know how to approximate a function such as 
	 $\bx\mapsto [\norm{\bx}_1-1]_+$ with 2-layer networks to better than 
	 constant accuracy, using less than $\Omega(2^d)$ neurons.

	\section{$C^2$ Nonlinear Functions; ReLU Networks}\label{sec:C2}
	
	In this section, we establish a depth separation result for approximating 
	continuously twice-differentiable ($C^2$) functions using ReLU neural 
	networks. Unlike 
	the previous 
	results in this paper, the separation is for depths which can be larger 
	than 3, depending on the required approximation error. Also, the results 
	will all be with respect to the uniform distribution $\mu_d$ over 
	$[0,1]^d$. 
	As mentioned earlier, the results and techniques in this section 
	are closely related to the independent results of 
	\cite{yarotsky2016error,liang2016deep}, but our emphasis is on 
	$L_2$ rather than $L_{\infty}$ approximation bounds, and we focus on 
	somewhat different network architectures and function classes.
	
	Clearly, not all $C^2$ functions are difficult to approximate (e.g. a 
	linear function can be expressed exactly with a 2-layer network). Instead, 
	we consider functions which have a certain degree of non-linearity, in the 
	sense that its Hessians are non-zero along some direction, on a significant 
	portion of the domain. Formally, we make the following definition: 
	\begin{definition}\label{def:sigmalambda}
		Let $ \mu_d $ denote the uniform distribution on $ \pcc{0,1}^d $. For a function $ f:\pcc{0,1}^d\to\bbr $ and some $ \lambda>0 $, denote
		\begin{equation*}
			\sigma_{\lambda}\p{f} = \sup_{\bv\in\mathbb{S}^{d-1},~U\in \mathcal{U}~\text{s.t.}~\bv^{\top}H(f)(\bx)\bv\ge\lambda~\forall \bx\in U }\mu_d\p{U},
		\end{equation*}
		where $ \mathbb{S}^{d-1}=\set{\bx:\norm{\bx}_2=1} $ is the 
		$d$-dimensional unit hypersphere, and $\mathcal{U}$ is the set of all 
		connected and measurable subsets of $\pcc{0,1}^d$.
	\end{definition}
		
	In words, $ \sigma_{\lambda}\p{f} $ is the measure (w.r.t. the uniform 
	distribution on $\pcc{0,1}^d$) of the largest connected set in the domain 
	of $ f $, where at any point, $ f $ has curvature at least $ \lambda $ 
	along some fixed direction $ \bv $. The ``prototypical'' functions $f$ we 
	are 
	interested in is when $\sigma_{\lambda}(f)$ is lower bounded by a constant 
	(e.g. it is $1$ if $f$ is strongly convex). We stress that our results in this section will hold equally well by considering the condition $ \bv^{\top}H(f)(\bx)\bv\le-\lambda $ as well, however for the sake of simplicity we focus on the former condition appearing in Def. \ref{def:sigmalambda}.	
	Our goal is to show a depth separation result \emph{inidividually} for any such function (that is, for 
	any such function, there is a gap in the attainable error between deeper 
	and shallower networks, even if the shallow network is considerably 
	larger). 
	
	As usual, we start with an inapproximability result. Specifically, we prove 
	the following lower bound on the attainable approximation error of $f$, 
	using a ReLU neural network of a given depth and width: 

	\begin{theorem}\label{thm:c2_lowbound}
		For any $C^2$ function $ f:\pcc{0,1}^d\to\bbr $, any $\lambda>0$, and 
		any function $g$ on $[0,1]^d$ expressible by a ReLU network of depth 
		$l$ and maximal width $m$, it holds that 
		\begin{equation*}
		\int_{\pcc{0,1}^d} (f(\bx)-g(\bx)^2 \mu_d\p{\bx}d\bx \ge 
		\frac{c\cdot\lambda^2\cdot\sigma_\lambda^5}{(2m)^{4l}}~,
		\end{equation*}
		where $c>0$ is a universal constant.
	\end{theorem}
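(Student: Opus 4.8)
The plan is to reduce the $d$-dimensional approximation problem to a one-parameter family of one-dimensional problems obtained by slicing $[0,1]^d$ along the ``bad curvature'' direction, bound each one-dimensional problem separately, and then recombine the bounds via a convexity (power-mean) argument. First, fix a direction $\bv\in\mathbb{S}^{d-1}$ and a connected measurable set $U\subseteq[0,1]^d$ with $\bv^{\top}H(f)(\bx)\bv\ge\lambda$ for all $\bx\in U$ and $\mu_d(U)$ arbitrarily close to $\sigma_\lambda:=\sigma_\lambda(f)$ (these exist by Definition~\ref{def:sigmalambda}). Write $\bbr^d=\bv^{\perp}\oplus\bbr\bv$; for $\by\in\bv^{\perp}$ let $\ell_{\by}=\{\by+t\bv:t\in\bbr\}$, $S_{\by}=\{t:\by+t\bv\in U\}$, and $L_{\by}=|S_{\by}|$ its Lebesgue measure. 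Since $(\by,t)\mapsto\by+t\bv$ is an isometry, Fubini gives $\int_{\bv^{\perp}}L_{\by}\,d\by=\mu_d(U)\approx\sigma_\lambda$ and $\int_{[0,1]^d}(f-g)^2\,d\mu_d\ \ge\ \int_{\bv^{\perp}}\left(\int_{S_{\by}}(f(\by+t\bv)-g(\by+t\bv))^2\,dt\right)d\by$.

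Second, I would lower bound the inner one-dimensional integral. Two ingredients are needed. (i) \emph{Piece counting}: the restriction of a depth-$l$, width-$m$ ReLU network to any line is piecewise-linear with at most $(2m)^{l}$ pieces --- the first layer contributes at most $m$ breakpoints on the line, and passing through each further layer multiplies the number of pieces by at most $m+1\le 2m$, since a linear combination of piecewise-linear functions keeps the common breakpoint set and each subsequent ReLU adds at most one breakpoint per existing piece (this is standard; cf.\ \cite{telgarsky2016benefits}). (ii) A \emph{one-dimensional inapproximability estimate}: if $h$ has $h''\ge\lambda$ on the convex hull of a measurable set $A$ of measure $a$, then $\inf_{p\ \mathrm{affine}}\int_A(h-p)^2\ge c_0\lambda^2 a^{5}$; for an interval this follows by expanding $h$ in Legendre polynomials on that interval and observing that the (explicit, sign-definite) second antiderivative of the degree-$2$ Legendre polynomial has positive integral of order $a^{5/2}$, so the degree-$2$ coefficient of $h$ is at least of order $\lambda a^{5/2}$; the general measurable-set case follows since splitting $A$ into separated pieces can only increase the best affine error. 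Partitioning $S_{\by}$ by the at most $(2m)^{l}$ linear regions of $g$ along $\ell_{\by}$, applying (ii) on each part to get $\sum_j c_0\lambda^2 a_j^{5}$, and using the power-mean inequality $\sum_j a_j^{5}\ge k^{-4}\big(\sum_j a_j\big)^{5}$ with $k=(2m)^{l}$ parts, yields $\int_{S_{\by}}(f-g)^2\,dt\ \ge\ c_0\lambda^2 L_{\by}^{5}\,(2m)^{-4l}$.

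Third, integrate this over $\by$ and apply Jensen's inequality for the convex map $x\mapsto x^{5}$, with $\by$ ranging over the shadow $P:=\mathrm{proj}_{\bv^{\perp}}([0,1]^d)$: since $\int_P L_{\by}\,d\by\approx\sigma_\lambda$, one gets $\int_P L_{\by}^{5}\,d\by\ \ge\ \mathrm{vol}(P)\cdot\big(\sigma_\lambda/\mathrm{vol}(P)\big)^{5}=\sigma_\lambda^{5}/\mathrm{vol}(P)^{4}$. Combining with the previous step and letting $\mu_d(U)\uparrow\sigma_\lambda$ gives $\int_{[0,1]^d}(f-g)^2\,d\mu_d\ \ge\ c\,\lambda^2\,\sigma_\lambda^{5}\,(2m)^{-4l}$ (the shadow of the unit cube perpendicular to a unit vector has $(d-1)$-volume $\norm{\bv}_1\le\sqrt d$, which is the only place any dimension dependence could enter and can be absorbed).

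The main obstacle I anticipate is extracting the one-dimensional estimate (ii) with a clean universal constant that survives the aggregation --- in particular, handling slices $S_{\by}$ (and their intersections with $g$'s linear regions) that are not intervals, and ensuring the per-piece bound $c_0\lambda^2 a_j^{5}$ is uniform in the geometry. The piece-counting bound and the Fubini bookkeeping are routine; the delicate point is squeezing out the exponent $5$ on $\sigma_\lambda$ and the exponent $4l$ on $(2m)$ simultaneously from the interplay between ``number of linear pieces along a line'' and ``$L_2$ cost of fitting an affine function to a curved one.''
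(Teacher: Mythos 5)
Your proposal follows essentially the same route as the paper's own proof: slice $[0,1]^d$ along the curvature direction, prove a one-dimensional per-piece bound of order $\lambda^2\ell^5$ via the degree-2 (shifted) Legendre coefficient of a $\lambda$-strongly convex restriction, aggregate the at most $(2m)^l$ linear pieces along each line (the paper gets this count from \cite{telgarsky2016benefits}) by a H\"older/power-mean inequality to obtain the $(2m)^{-4l}$ factor, and aggregate over lines with Jensen's inequality for $x\mapsto|x|^5$ to obtain $\sigma_\lambda^5$. The one place you go beyond the paper, handling non-interval slices and pieces via a convex-hull argument, is only sketched, but the required bound is correct and provable (e.g., by a level-set argument for $\lambda$-strongly convex functions), so the proposal stands as essentially the paper's argument.
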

	The theorem conveys a key tradeoff between depth and 
	width when approximating a $C^2$ function using ReLU networks: The error 
	cannot decay faster than polynomially in the width $m$, yet the bound 
	deteriorates exponentially in the depth $l$. As we show later on, this 
	deterioration does not stem from the looseness in the bound: For 
	well-behaved $f$, it is indeed possible to construct ReLU networks, where 
	the approximation error decays exponentially with depth.
	
	The proof of \thmref{thm:c2_lowbound} appears in \subsecref{subsec:c2lbound}, and is based 
	on a series of intermediate results. First, we show that any 
	strictly curved function (in a 
	sense similar to Definition \ref{def:sigmalambda}) cannot be 
	well-approximated in an $L_2$ sense by piecewise linear functions, unless 
	the number of linear regions is large. 
	To that end, we first establish some necessary tools based on Legendre 
	polynomials. We then prove a result specific to the one-dimensional case, 
	including an explicit lower bound if the target function is quadratic 
	(\thmref{thm:quad_lbound}) or strongly convex or concave 
	(\thmref{thm:str_convex_lbound}). We then expand the construction 
	to get an error lower bound in general dimension $d$, depending on the 
	number of linear regions in the approximating piecewise-linear function. 
	Finally, we note that any ReLU network induces a piecewise-linear function, 
	and bound the number of linear regions induced by a ReLU network of 
	a given width and depth (using a lemma borrowed from 
	\cite{telgarsky2016benefits}). Combining this with the previous lower bound 
	yields \thmref{thm:c2_lowbound}. 
	
	We now turn to complement this lower bound with an approximability result, 
	showing that with more depth, a wide family of functions to which 
	\thmref{thm:c2_lowbound} applies \emph{can} be approximated with 
	exponentially high accuracy. Specifically, we consider functions which can 
	be approximated using a moderate number of multiplications and additions, 
	where the values of intermediate computations are bounded (for example, a 
	special case is any function approximable by a moderately-sized Boolean 
	circuit, or a polynomial).

	The key result to show this is the following, which implies that the 
	multiplication of two (bounded-size) numbers can be approximated by a 
	ReLU network, with error decaying exponentially with depth:
	\begin{theorem}\label{thm:product_approx}
		Let $ f:\pcc{-M,M}^2\to\bbr $, $ f\p{x,y} =x\cdot y$ and let $ \epsilon>0 $ be arbitrary. Then exists a ReLU neural network $ g $ of width $ 4\ceil{\log\p{\frac{M}{\epsilon}}}+13 $ and depth $ \ceil{2\log\p{\frac{M}{\epsilon}}}+9 $ satisfying
		\begin{equation*}
		\sup_{\p{x,y}\in\pcc{-M,M}^2}\abs{f\p{x,y} -g\p{x,y}}\le\epsilon.
		\end{equation*}
	\end{theorem}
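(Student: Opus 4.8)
The plan is to reduce the approximation of the bilinear map $(x,y)\mapsto x\cdot y$ to the approximation of the univariate squaring function, and then to approximate squaring with the ``sawtooth'' construction used in \cite{telgarsky2016benefits} and \cite{yarotsky2016error}. Concretely, I would use the polarization identity
\[
x\cdot y ~=~ \tfrac14\left((x+y)^2-(x-y)^2\right),
\]
so that it suffices to approximate $z\mapsto z^2$ on $\pcc{-2M,2M}$ (the range of $x\pm y$) by a ReLU network, and then combine two copies of it with output weights $\pm\tfrac14$. To approximate squaring, recall the tent map $g:\pcc{0,1}\to\pcc{0,1}$ given by $g(t)=2t$ for $t\le\tfrac12$ and $g(t)=2(1-t)$ otherwise, which a single ReLU layer computes exactly on $\pcc{0,1}$ via the two-neuron gadget $g(t)=2\relu{t}-4\relu{t-\tfrac12}$. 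Its $k$-fold composition $g^{\circ k}$ is piecewise linear with $2^{k-1}$ ``teeth'', and the key fact (which I would verify first, or extract from \cite{telgarsky2016benefits}) is that
\[
s_m(t)~:=~t-\sum_{k=1}^{m}\frac{g^{\circ k}(t)}{4^{k}}
\]
is exactly the piecewise-linear interpolant of $t\mapsto t^2$ at the points $\{j/2^m:0\le j\le 2^m\}$, whence $\sup_{t\in\pcc{0,1}}|s_m(t)-t^2|\le 2^{-2m-2}$.

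Next I would turn $s_m$ into a ReLU network. The iterates $g^{\circ 1}(t),\dots,g^{\circ m}(t)$ are produced one per layer by chaining the gadget for $g$; in parallel, a constant number of additional ``carry'' neurons (routing a signed quantity $u$ as $\relu{u}-\relu{-u}$) propagate the original input $t$ and accumulate the running sum $\sum_k g^{\circ k}(t)/4^k$, and the final linear layer outputs $s_m(t)$. This gives a network of depth $\Ocal(m)$ and $\Ocal(1)$ width approximating $t^2$ on $\pcc{0,1}$. To handle the interval $\pcc{-2M,2M}$, I would prepend one layer computing $|z|=\relu{z}+\relu{-z}$, feed $|z|/(2M)\in\pcc{0,1}$ into the above subnetwork, and multiply its output by $(2M)^2$; this approximates $z^2$ on $\pcc{-2M,2M}$ with error at most $4M^2\cdot 2^{-2m-2}=M^2\,2^{-2m}$.

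Finally, applying the polarization identity and the triangle inequality, the resulting network $g(x,y)$ satisfies $\sup_{(x,y)\in\pcc{-M,M}^2}|g(x,y)-xy|\le\tfrac14\bigl(M^2 2^{-2m}+M^2 2^{-2m}\bigr)=M^2 2^{-2m-1}$, so taking $m=\Theta(\log(M/\epsilon))$ forces this below $\epsilon$; substituting the exact choice of $m$ into the depth and width counts — the $\Ocal(m)$ layers of chained tent maps, the constant-size carry lanes, the two squaring blocks, and the $\Ocal(1)$ layers for computing $|x\pm y|$ and for the final affine recombination — yields the stated bounds of width $4\ceil{\log(M/\epsilon)}+13$ and depth $\ceil{2\log(M/\epsilon)}+9$. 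The main obstacle here is not conceptual but one of precise accounting: establishing the sawtooth interpolation identity together with its $2^{-2m-2}$ error bound, and then tracking every neuron and every layer — in particular the signed carry lanes, whether the two squaring blocks are composed in parallel or in series, and the leading and trailing affine adjustments — tightly enough that the universal constants come out as claimed rather than merely up to additive $\Ocal(1)$ slack; one must also check that all intermediate activations stay inside the ranges ($\pcc{0,1}$ for $g$, and so on) where the gadgets are exact.
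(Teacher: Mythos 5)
Your proposal is correct, but it takes a genuinely different route from the paper. The paper proves \thmref{thm:product_approx} by binary expansion and ``long multiplication'': after rescaling $x$ to $\pcc{0,1}$, it truncates $x\cdot y=\sum_i 2^{-i}x_i y$ at $k=\Theta(\log(M/\epsilon))$ bits, extracts each bit $x_i$ with Telgarsky's oscillating map $\varphi^i$ composed with a steep two-ReLU ramp $\sigma_\delta$ (requiring weights of magnitude roughly $1/\epsilon$), and multiplies each bit by $y$ via the gadget $\relu{2^{-i}y+x_i-1}$; this yields width about $2k+1$ and depth about $k+1$. You instead use the polarization identity $xy=\frac14\p{(x+y)^2-(x-y)^2}$ together with Yarotsky's telescoping-sawtooth representation $s_m(t)=t-\sum_{k=1}^m 4^{-k}g^{\circ k}(t)$ of the dyadic piecewise-linear interpolant of $t^2$, with the standard error bound $2^{-2m-2}$ (note this identity is Yarotsky's, not Telgarsky's, but it is a short induction and you correctly flag verifying it). Your accounting checks out: each squaring block needs only the two tent neurons plus one or two nonnegative carry lanes per layer (the partial sums $s_j$ are interpolants of $t^2$ and hence nonnegative, so a single ReLU carry suffices), so two parallel blocks plus the $|x\pm y|$ preprocessing fit in constant width, and depth $m+\Ocal(1)$ with $m\approx\log_2 M+\tfrac12\log_2(1/\epsilon)$ suffices, which sits comfortably inside the stated $\ceil{2\log(M/\epsilon)}+9$ and $4\ceil{\log(M/\epsilon)}+13$ (pad with dummy neurons if one insists on the exact architecture). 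In fact your construction is quantitatively stronger than the paper's: it achieves constant width rather than width $\Theta(\log(M/\epsilon))$, it is exactly piecewise linear (no $\delta\to 0$ sharp-threshold approximation, hence a cleaner uniform $L_\infty$ bound, whereas the paper's bit extractor is inaccurate near dyadic breakpoints of $x$), and it avoids weights of size $1/\epsilon$. What the paper's bit-extraction route buys in exchange is a construction that meshes directly with its ``long multiplication'' narrative and with the high-oscillation lower-bound machinery of \cite{telgarsky2016benefits} used elsewhere in \secref{sec:C2}; both routes prove the stated theorem.
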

	The idea of the construction is that depth allows us to compute highly-oscillating functions, which can extract high-order bits from the binary representation of the inputs. Given these bits, one can compute the product by a procedure resembling long multiplication, as shown in \figref{fig:quad_approx}, and formally proven as follows:
	\begin{figure}
		\centering
		\includegraphics[scale=0.4]{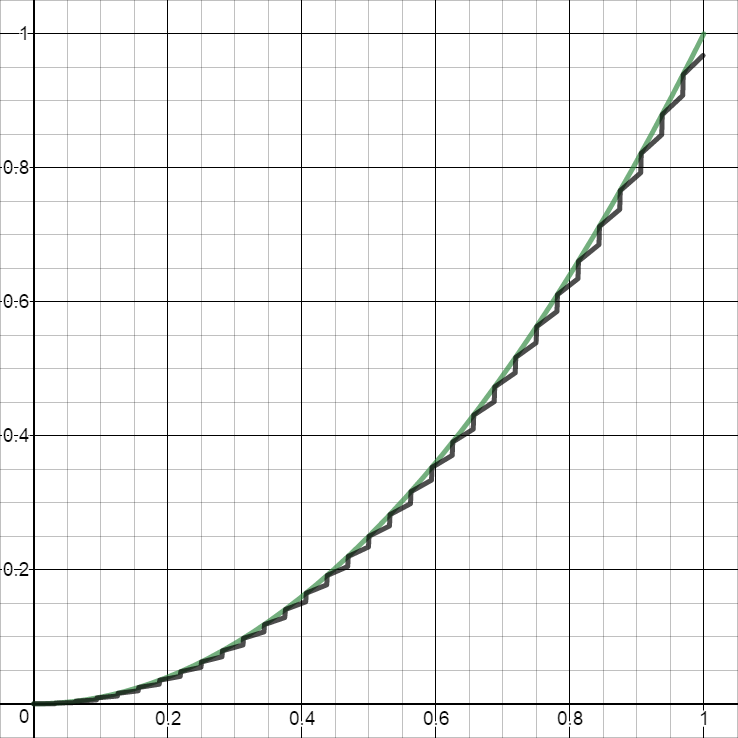}
		\caption{\footnotesize ReLU approximation of the function $ x\mapsto x^2 $ obtained by extracting 5 bits. The number of linear segments grows exponentially with the number of bits and the approximating network size.}
		\label{fig:quad_approx}
	\par\end{figure}
	
	\begin{proof}[Proof of \thmref{thm:product_approx}]
		We begin by observing that by using a simple linear change of variables on $ x $, we may assume without loss of generality that $ x\in\pcc{0,1} $, as we can just rescale $ x $ to the interval $ \pcc{0,1} $, and then map it back to its original domain $ \pcc{-M,M} $, where the error will multiply by a factor of $ 2M $. Then by requiring accuracy $ \frac{\epsilon}{2M} $ instead of $ \epsilon $, the result will follow.
		
		The key behind the proof is that performing bit-wise operations on the first $ k $ bits of $ x\in\pcc{0,1} $ yields an estimation of the product to accuracy $ 2^{1-k}M $. Let $ x=\sum_{i=1}^{\infty} 2^{-i}x_i$ be the binary representation of $ x $ where $ x_i $ is the $ i^\text{th} $ bit of $ x $, then
		\begin{align}\label{eq:binary_rep}
		x\cdot y &= \sum_{i=1}^{\infty}2^{-i}x_i\cdot y \nonumber\\
		&= \sum_{i=1}^{k}2^{-i}x_i\cdot y +\sum_{i=k+1}^{\infty}2^{-i}x_i\cdot y.
		\end{align}
		But since
		\begin{equation*}
		\abs{\sum_{i=k+1}^{\infty}2^{-i}x_i\cdot y} \le \abs{\sum_{i=k+1}^{\infty}2^{-i}\cdot y} = 2^{-k}\abs{y}\le2^{1-k}M,
		\end{equation*}
		\eqref{eq:binary_rep} implies
		\begin{equation*}
		\abs{x\cdot y-\sum_{i=1}^{k}2^{-i}x_i\cdot y}\le2^{1-k}M.
		\end{equation*}
		Requiring that $ 2^{2-k}M\le\frac{\epsilon}{2M} $, it suffices to show the existence of a network which approximates the function $ \sum_{i=1}^{k}2^{-i}x_i\cdot y $ to accuracy $ \frac{\epsilon}{2} $,
		where $ k=2\ceil{\log\p{\frac{8M}{\epsilon}}} $. This way both approximations will be at most $ \frac{\epsilon}{2} $, resulting in the desired accuracy of $ \epsilon $.
		
		Before specifying the architecture which extracts the $ i^\text{th} $ bit of $ x $, we first describe the last 2 layers of the network. Let the penultimate layer comprise of $ k $ neurons, each receiving both $ y $ and $ x_i $ as input, and having the set of weights $ \p{2^{-i},1,-1} $. Thus, the output of the $ i^\text{th} $ neuron in the penultimate layer is $$ \relu{2^{-i}y+x_i-1}= 2^{-i}x_iy.$$ Let the final single output neuron have the set of weights $ \p{1,\dots,1,0}\in\bbr^{k+1} $, this way, the output of the network will be
		$ \sum_{i=1}^{k}2^{-i}x_i\cdot y $ as required.
		
		We now specify the architecture which extracts the first most significant $ k $ bits of $ x $. In \citet{telgarsky2016benefits}, the author demonstrates how the composition of the function
		\begin{equation*}
		\varphi\p{x}=\relu{2x}-\relu{4x-2}
		\end{equation*}
		with itself $ i $ times, $ \varphi^i $, yields a highly oscillatory triangle wave function in the domain $ \pcc{0,1} $. Furthermore, we observe that $ \varphi\p x=0~\forall x\le 0 $, and thus $ \varphi^i\p{x}=0 ~\forall x\le 0 $. Now, a linear shift of the input of $ \varphi^i $ by $ 2^{-i-1} $, and composing the output with
		\begin{equation*}
		\sigma_\delta\p{x} = \relu{\frac{1}{2\delta}x-\frac{1}{4\delta}+\frac{1}{2}} - \relu{\frac{1}{2\delta}x-\frac{1}{4\delta}-\frac{1}{2}},
		\end{equation*}
		which converges to $ x\mapsto\ind{x\ge0.5} $ as $ \delta\to0 $, results in an approximation of $ x\mapsto x_i $:
		\begin{equation*}
		\sigma_{\delta}\p{\varphi^i\p{x-2^{-i-1}}}.
		\end{equation*}
		We stress that choosing $ \delta $ such that the network approximates the bit-wise product $  $ to accuracy $ \frac{\epsilon}{2} $ will require $ \delta $ to be of magnitude $ \frac{1}{\epsilon} $, but this poses no problem as representing such a number requires $ \log\p{\frac{1}{\epsilon}} $ bits, which is also the magnitude of the size of the network, as suggested by the following analysis.
		
		Next, we compute the size of the network required to implement the above approximation. To compute $ \varphi $ only two neurons are required, therefore $ \varphi^i $ can be computed using $ i $ layers with $ 2 $ neurons in each, and finally composing this with $ \sigma_\delta $ requires a subsequent layer with $ 2 $ more neurons. To implement the $ i^\text{th} $ bit extractor we therefore require a network of size $ 2\times\p{i+1} $. Using dummy neurons to propagate the $ i^\text{th} $ bit for $ i<k $, the architecture extracting the $ k $ most significant bits of $ x $ will be of size $ 2k\times\p{k+1} $. Adding the final component performing the multiplication estimation will require $ 2 $ more layers of width $ k $ and $ 1 $ respectively, and an increase of the width by $ 1 $ to propagate $ y $ to the penultimate layer, resulting in a network of size $ \p{2k+1}\times\p{k+1} $.
	\end{proof}
	
	
	\thmref{thm:product_approx} shows that multiplication can be performed 
	very accurately by deep networks. Moreover, additions can be computed by 
	ReLU networks exactly, using only a single layer with $ 4 $ neurons: Let $ 
	\alpha,\beta\in\bbr $ be arbitrary, then $ \p{x,y}\mapsto\alpha\cdot 
	x+\beta\cdot y $ is given in terms of ReLU summation by
	\begin{equation*}
	\alpha\relu{x}-\alpha\relu{-x}+\beta\relu{y}-\beta\relu{-y}.
	\end{equation*}
	Repeating these arguments, we see that any function which can be 
	approximated by a bounded number of operations involving additions and 
	multiplications, can also be approximated well by moderately-sized 
	networks. This is formalized in the following theorem, which provides an 
	approximation error upper bound (in the $L_{\infty}$ sense, which is 
	stronger than $L_2$ for upper bounds):
	\begin{theorem}\label{thm:main_ubound}
		Let $ \mathcal{F}_{t,M,\epsilon} $ be the family of functions on the 
		domain $ \pcc{0,1}^d $ with the property that $ 
		f\in\mathcal{F}_{t,M,\epsilon} $ is approximable to accuracy $ \epsilon 
		$ with respect to the infinity norm, using at most $ t $ operations 
		involving weighted addition, $ \p{x,y}\mapsto\alpha\cdot x+\beta\cdot y 
		$, where $ \alpha,\beta\in\bbr $ are fixed; and multiplication, $ 
		\p{x,y}\mapsto x\cdot y $, where each intermediate computation stage is 
		bounded in the interval $ \pcc{-M,M} $. Then there exists a universal 
		constant $ c $, and a ReLU network $ g $ of width and depth at most $ 
		c\p{t\log\p{\frac{1}{\epsilon}}+t^2\log\p{M}}$, such that
		\begin{equation*}
		\sup_{\bx\in\pcc{0,1}^d}\abs{f\p \bx-g\p \bx}\le2\epsilon.
		\end{equation*}
	\end{theorem}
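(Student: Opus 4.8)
The plan is to emulate, with a ReLU network, the arithmetic circuit of $t$ operations that $\epsilon$-approximates $f$: each weighted-addition gate $(x,y)\mapsto\alpha x+\beta y$ is reproduced exactly by the single-layer gadget $\alpha[x]_+-\alpha[-x]_++\beta[y]_+-\beta[-y]_+$ discussed above, and each multiplication gate is replaced by a copy of the network of \thmref{thm:product_approx}, with its target accuracy $\epsilon'$ chosen small enough that the errors introduced by these approximate multiplications, after propagating through the rest of the circuit, still sum to at most $\epsilon$. We may assume $M\ge1$ (else $[-M,M]\subseteq[-1,1]$) and $\epsilon<1/e$ (else replace $\epsilon$ by $1/e$).

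Write $\tilde f$ for the circuit's output, so $\sup_\bx|f(\bx)-\tilde f(\bx)|\le\epsilon$, and let $z_1,\dots,z_t$ be the gate values in topological order, with $z_t=\tilde f(\bx)$ and $|z_j|\le M$. I would construct $g$ level by level: at each stage the network holds, as the outputs of one of its layers, the current approximations $\hat z_j$ of the already-evaluated gates together with the inputs $x_1,\dots,x_d$, which are passed forward by identity neurons $v\mapsto[v]_+-[-v]_+$; the next level of gates is then computed, addition gates exactly and each multiplication gate by a \thmref{thm:product_approx} gadget run on the domain $[-2M,2M]$ with accuracy $\epsilon'$. To control the error, set $\eta_j=|\hat z_j-z_j|$, let $B=\max\{1,\max(|\alpha|+|\beta|)\}$ over the addition gates (a universal constant for Boolean circuits or bounded-coefficient polynomials, and in any case harmless since it can be absorbed into $M$), and observe that as long as all $\eta$'s stay below $M$ an addition gate gives $\eta_j\le B\max(\eta_a,\eta_b)$ while a multiplication gate gives $\eta_j\le3M\max(\eta_a,\eta_b)+\epsilon'$. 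A depth-graded induction over the circuit (whose depth is at most $t$) then yields $\eta_j\le\epsilon'\cdot(3MB+1)^{d_j}\le\epsilon'\cdot(3MB+1)^t$, where $d_j$ is the depth of gate $j$; choosing $\epsilon'=\epsilon\,(3MB+1)^{-t}$ makes every $\eta_j\le\epsilon\le M$ (so the induction's precondition holds and the gadgets' domain $[-2M,2M]$ is valid), and in particular $\eta_t\le\epsilon$. By the triangle inequality $\sup_\bx|f(\bx)-g(\bx)|\le2\epsilon$.

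For the size bound, \thmref{thm:product_approx} gives each multiplication gadget width and depth $O(\log(M/\epsilon'))$, and $\log(M/\epsilon')=\log M+t\log(3MB+1)=O(\log(1/\epsilon)+t\log M)$ after absorbing $B$; since there are at most $t$ circuit levels, the depth of $g$ is $O(t(\log(1/\epsilon)+t\log M))=O(t\log(1/\epsilon)+t^2\log M)$, and since at most $t$ gadgets appear in any level while forwarding the intermediate values costs only $O(t+d)$ extra neurons per layer (with $d=O(t)$ after discarding inputs $f$ does not depend on), the width is bounded the same way. The only genuinely delicate step is this error bookkeeping: one must take $\epsilon'$ as small as $\epsilon\,M^{-\Theta(t)}$ to withstand $t$ rounds of roughly-multiplicative error growth, and the point is that this still only makes $\log(1/\epsilon')$ grow to $O(\log(1/\epsilon)+t\log M)$, so the per-gadget — and hence the total — size stays within the claimed bound.
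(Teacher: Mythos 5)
Your proposal is correct and follows essentially the same route as the paper's proof: emulate the $t$ operations with exact one-layer ReLU additions and Theorem~\ref{thm:product_approx} gadgets for the multiplications, observe that the error grows by a multiplicative factor of order $3M$ per operation, take the per-gadget accuracy of order $\epsilon\, M^{-\Theta(t)}$, and note that this only inflates each gadget's size to $O(\log(1/\epsilon)+t\log M)$, giving the claimed $O(t\log(1/\epsilon)+t^2\log M)$ total width and depth. If anything, your depth-graded induction with the factor $B$ for the weighted additions is more careful than the paper's own bookkeeping, which tacitly bounds each addition's error by $2\delta$ (i.e., ignores the coefficients $\alpha,\beta$), so the slight hand-waving in ``absorbing $B$ into $M$'' is no worse than what the paper itself does.
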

	As discussed in \secref{sec:prelim}, this type of $L_{\infty}$ 
	approximation 
	bound implies an $L_{2}$ approximation bound with respect to any 
	distribution. The proof of the theorem appears in \subsecref{subsec:main_ubound_proof}.
	
	Combining \thmref{thm:c2_lowbound} and \thmref{thm:main_ubound}, we can 
	state the following corollary, which formally shows how depth can be 
	exponentially more valuable than width as a function of the target accuracy 
	$\epsilon$: 
	
	\begin{corollary}\label{cor:main_result}
		Suppose $ f\in 
		C^2\cap\mathcal{F}_{t\p{\epsilon},M\p{\epsilon},\epsilon} $, where $ 
		t\p{\epsilon}=\mathcal{O}\p{\textnormal{poly}\p{\log\p{1/\epsilon}}} $ 
		and 
		$ M\p{\epsilon}=\mathcal{O}\p{\textnormal{poly}\p{1/\epsilon}} $. Then 
		approximating $ f $ to accuracy $ \epsilon $ in the $ L_2 $ norm using 
		a fixed depth ReLU network requires width at least $ 
		\textnormal{poly}(1/\epsilon) $, whereas there exists a ReLU network of 
		depth and width at most $ p\p{\log\p{1/\epsilon}} $ which approximates 
		$ f $ to accuracy $ \epsilon $ in the infinity norm, where $ p $ is a 
		polynomial depending solely on $ f $.
	\end{corollary}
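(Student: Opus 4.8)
The corollary is obtained by plugging the hypothesized growth rates of $t(\epsilon)$ and $M(\epsilon)$ into the two main theorems of this section: \thmref{thm:c2_lowbound} supplies the width lower bound for fixed-depth networks, and \thmref{thm:main_ubound} supplies the logarithmic-size upper bound. There is no real difficulty beyond checking that the resulting quantities land on the correct side of the $\mathrm{poly}(1/\epsilon)$ versus $\mathrm{poly}(\log(1/\epsilon))$ dichotomy; I treat the two halves in turn.

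\emph{The lower bound.} First I fix a curvature scale. Since $f\in C^2$ is non-linear, $H(f)$ is non-zero at some point, so there exist a unit vector $\bv$, a constant $\lambda>0$, and (by continuity of $H(f)$) a small ball $U\subseteq\pcc{0,1}^d$ on which $\bv^{\top}H(f)(\bx)\bv\ge\lambda$ for all $\bx\in U$ --- or the analogous inequality with $-\lambda$, which is handled identically by the remark following Definition~\ref{def:sigmalambda}. Consequently $\sigma:=\sigma_{\lambda}(f)>0$ is a positive constant depending only on $f$. Now let $l$ be any fixed depth (a constant independent of $\epsilon$) and suppose $g$ is a ReLU network of depth $l$ and width $m$ with $\int_{\pcc{0,1}^d}\p{f(\bx)-g(\bx)}^2\mu_d\p{\bx}d\bx\le\epsilon$. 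Together with \thmref{thm:c2_lowbound} this gives
\[
\epsilon ~\ge~ \frac{c\,\lambda^2\,\sigma^5}{(2m)^{4l}},
\]
i.e. $m\ge\frac{1}{2}\left(c\,\lambda^2\,\sigma^5/\epsilon\right)^{1/(4l)}$. As $c,\lambda,\sigma,l$ do not depend on $\epsilon$, this is $\Omega(\epsilon^{-1/(4l)})=\mathrm{poly}(1/\epsilon)$, the degree of the polynomial being governed by the fixed depth $l$.

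\emph{The upper bound.} I invoke \thmref{thm:main_ubound} with target accuracy $\epsilon/2$. Because $f\in\mathcal{F}_{t(\epsilon/2),M(\epsilon/2),\epsilon/2}$, the theorem produces a ReLU network $g$ with $\sup_{\bx\in\pcc{0,1}^d}\abs{f(\bx)-g(\bx)}\le\epsilon$ whose width and depth are at most $c\bigl(t\log(2/\epsilon)+t^2\log M\bigr)$ with $t=t(\epsilon/2)=\Ocal(\mathrm{poly}(\log(1/\epsilon)))$ and $M=M(\epsilon/2)=\Ocal(\mathrm{poly}(1/\epsilon))$. Then $t\log(2/\epsilon)=\Ocal(\mathrm{poly}(\log(1/\epsilon)))$, and since $\log M=\Ocal(\log(1/\epsilon))$ we also get $t^2\log M=\Ocal(\mathrm{poly}(\log(1/\epsilon)))$; hence both the width and the depth are bounded by $p(\log(1/\epsilon))$, where $p$ is a polynomial whose degree and coefficients are determined by the (fixed, $f$-dependent) polynomials bounding $t(\cdot)$ and $M(\cdot)$, and therefore depend only on $f$. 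An $L_{\infty}$ error of $\epsilon$ is in particular an $L_2$ error of $\epsilon$ with respect to $\mu_d$, so the same network also meets the $L_2$ accuracy requirement.

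\emph{Where the work is.} There is no substantive obstacle: both building blocks are already proven and the proof is essentially arithmetic bookkeeping. The only points that need a moment's care are (i) producing a positive $\sigma_{\lambda}(f)$ from the mere non-linearity of $f$; (ii) noting that the width exponent $1/(4l)$ in the lower bound depends on the fixed depth $l$ but stays a positive constant as $\epsilon\to 0$; and (iii) the harmless factor-$2$ rescaling of the accuracy when passing from \thmref{thm:main_ubound} (which yields $2\epsilon$) to the stated $\epsilon$. It is worth emphasizing that the ``fixed depth'' qualifier in the lower bound is essential and is exactly what produces the separation: the upper bound already exhibits a network of depth $p(\log(1/\epsilon))$, so allowing the depth to grow even logarithmically with $1/\epsilon$ removes the $\mathrm{poly}(1/\epsilon)$ width requirement.
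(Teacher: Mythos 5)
Your proposal is correct and follows essentially the same route as the paper: the lower bound is read off from \thmref{thm:c2_lowbound} with the depth $l$ held fixed, and the upper bound comes from invoking \thmref{thm:main_ubound} at accuracy $\epsilon/2$ and substituting the assumed growth rates of $t(\cdot)$ and $M(\cdot)$ to get a $p\p{\log\p{1/\epsilon}}$ bound on width and depth. The only difference is that you spell out details the paper leaves implicit, such as extracting a positive $\sigma_\lambda(f)$ from the (implicitly assumed) non-linearity of $f$ and tracking the exponent $1/(4l)$, which is consistent with the paper's intent.
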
	
	\begin{proof}
	
	
	The lower bound follows immediately from \thmref{thm:c2_lowbound}. For the 
	upper bound, observe that \thmref{thm:main_ubound} implies an $ \epsilon $ 
	approximation by a network of width and depth at most
	\begin{equation*}
	c\p{t\p{\epsilon/2}\log\p{2/\epsilon}+\p{t\p{\epsilon/2}}^2\log\p{M\p{\epsilon/2}}},
	\end{equation*}
	which by the assumption of \corollaryref{cor:main_result}, can be bounded by
	\begin{equation*}
	p\p{\log\p{1/\epsilon}}
	\end{equation*}
	for some polynomial $ p $ which depends solely on $ f $.
	\end{proof}
	
	
	\subsection*{Acknowledgements}
	This research is supported in part by an FP7 Marie Curie CIG grant, Israel Science Foundation grant 425/13, and the Intel ICRI-CI Institute. We would like to thank Shai Shalev-Shwartz for some illuminating discussions, Eran Amar for his valuable help with the experiment, and Lv Xianzhang for spotting a mistake in a previous version of \secref{sec:indicators}.

	\bibliographystyle{icml2017}
	\bibliography{citations}

	\newpage
	
	\onecolumn

	\section{Proofs} \label{sec:proofs}

	\subsection{Proof of \thmref{thm:ellipsoidupbound}}\label{subsec:proofthml2ubound}
	
	This proof bears resemblance to the proof provided in 
	\citet{eldan2016power}[Lemma~10], albeit once approximating $ 
	\norm{\bx}_2^2 $, the following construction takes a slightly different 
	route. For completeness, we also state assumption 1 from 
	\citet{eldan2016power}:
	
	\begin{assumption}
	Given the activation function $\sigma$, there is a constant $c_\sigma\geq 
	1$ (depending only on $\sigma$) such that the following holds: For any 
	$L$-Lipschitz function $f:\reals\rightarrow\reals$ which is constant 
	outside a bounded interval $[-R,R]$, and for any $\delta$, there exist 
	scalars $a,\{\alpha_i,\beta_i,\gamma_i\}_{i=1}^{w}$, where $w\leq 
	c_\sigma\frac{RL}{\delta}$, such that the function
	\[
	h(x) = a+\sum_{i=1}^{w}\alpha_i\cdot\sigma(\beta_i x-\gamma_i)
	\]
	satisfies
	\[
	\sup_{x\in \reals} \left|f(x)-h(x)\right| \leq \delta.
	\]
	\end{assumption}	
	As discussed in \citet{eldan2016power}, this assumption is satisfied by 
	ReLU, sigmoid, threshold, and more generally all standard activation 
	functions we are familiar with.
	
	\begin{proof}
		Consider the $ 1 $-Lipschitz function
		\[
			l\p{x}=\min\set{x^2,1},
		\]
		which is constant outside $ \pcc{-1,1} $, as well as the function
		\[
			\ell\p{x}=\sum_{i=1}^dl(x_i)=\sum_{i=1}^d\min\set{x_i^2,1}
		\]
		on $ \bbr^d $. Applying assumption 1, we obtain a function $ 
		\tilde{l}(x) $ having the form $ a+\sum_{i=1}^w \alpha_i 
		\sigma\p{\beta_i x-\gamma_i} $ so that
		\[
			\sup_{x\in\bbr}\abs{\tilde{l}\p{x}-l\p{x}}\le\frac{\delta_1}{d},
		\]
		and where the width parameter $ w $ is at most $ \frac{c_{\sigma}d}{\delta_1} $. Consequently, the function
		\[
			\tilde{\ell}\p{\bx}=\sum_{i=1}^d\tilde{l}\p{x_i}
		\]
		can be expressed in the form $ a+\sum_{i=1}^w \alpha_i\sigma\p{\beta_i x-\gamma_i} $ where $ w\le\frac{c_{\sigma}d^2}{\delta_1} $, yielding an approximation satisfying
		\begin{equation}\label{eq:delta1}
			\sup_{\bx\in\bbr^d}\abs{\tilde{\ell}\p{\bx}-\ell\p{\bx}}\le\delta_1.
		\end{equation}
		We now invoke assumption 1 again to approximate the $ 1 $-Lipschitz function
		\begin{equation*}
			f\p{x}=\begin{cases}
				1 & x<0 \\
				1-x & x\in\pcc{0,1} \\
				0 & x>1
			\end{cases}
		\end{equation*}
		and obtain an approximation $ \tilde{f}\p{x}=\tilde{a}+\sum_{i=1}^{\tilde{w}} \tilde{\alpha}_i\sigma\p{\tilde{\beta}_i x-\tilde{\gamma}_i} $ satisfying
		\begin{equation}\label{eq:delta2}
			\sup_{x\in\bbr}\abs{\tilde{f}\p{x}-f\p{x}}\le \delta_2
		\end{equation}
		where $ \tilde{w}\le c_{\sigma}/2\delta_2 $.

		Now consider the composition $ g = \tilde{f}\circ \p{c_{\mu}\cdot\tilde{\ell}-c_{\mu}} $, where $ c_{\mu}>0 $ depends on $\mu$, to be determined later. This composition has the form
		\begin{equation*}
			a+\sum _{i=1}^wu_i\sigma\p{\sum_{j=1}^wv_{i,j}\sigma\p{\inner{\bw_{i,j},\bx}+b_{i,j}}+c_i}
		\end{equation*}
		for appropriate scalars $ a,u_i,c_i,v_{i,j},b_{i,j} $ and vectors $ \bw_{i,j} $, and where $ w $ is at most $ \max\set{c_{\sigma}d^2/\delta_1,c_{\sigma}/2\delta_2} $. It is now left to bound the approximation error obtained by $ g $.
		We have
		\begin{equation}\label{eq:tri_ineq}
			\norm{g-\ind{\norm{\bx}_2\le1}}_{L_2(\mu)} \le \norm{g-f\circ \p{c_{\mu}-c_{\mu}\cdot\ell}}_{L_2(\mu)} + \norm{f\circ \p{c_{\mu}-c_{\mu}\cdot\ell}-\ind{\norm{\bx}_2\le1}}_{L_2(\mu)}. 
		\end{equation}
		Beginning with the second summand, define for any $ \epsilon>0 $, $$ R_{\epsilon}=\set{\bx\in\bbr^d:1-\epsilon\le\norm{\bx}_2^2\le1}. $$ Since $ \mu $ is continuous, there exists $ \epsilon\in(0,1) $ such that
		\begin{equation}\label{eq:eps_ring}
			\int_{R_{\epsilon}}\mu\p{\bx}d\bx\le\frac{\sqrt{\delta}}{2}.
		\end{equation}
		We have for any $ \bx\in\bbr^d $ satisfying $ 1\le \norm{\bx}_2^2 $ that $\ell(\bx)=1$ and therefore 
		\begin{equation}\label{eq:outer}
			f\circ \p{c_{\mu}-c_{\mu}\cdot\ell}(\bx)=1.
		\end{equation} 
		Taking $ c_{\mu}=1/\epsilon $, we have for any $ \bx\in\bbr^d $ satisfying $ \norm{\bx}_2^2\le1-\epsilon $ that
		\begin{equation}\label{eq:inner}
			f\circ \p{c_{\mu}-c_{\mu}\cdot\ell}(\bx) \le f\p{\frac{1}{\epsilon} - \frac{1}{\epsilon}(1-\epsilon) } = f(1) = 0
		\end{equation}
		Combining both \eqref{eq:outer} and \eqref{eq:inner} we compute
		\begin{align}
			&\int_{\bbr^d}\p{f\p{c_{\mu}-c_{\mu}\cdot\ell\p{\bx}}-\ind{\norm{\bx}_2\le1}}^2\mu\p{\bx}d\bx \nonumber\\
			=& \int_{R_{\epsilon}}\p{f\p{c_{\mu}-c_{\mu}\cdot\ell\p{\bx}}-\ind{\norm{\bx}_2\le1}}^2\mu\p{\bx}d\bx \nonumber\\
			\le& \int_{R_{\epsilon}}1\cdot\mu(\bx)d\bx \le \frac{\sqrt{\delta}}{2}, \label{eq:cont_approx}
		\end{align}
		where the equality is since both functions are equal outside of $R_{\epsilon}$, the first inequality is since the difference between the two functions on $R_{\epsilon}$ is at most $1$, and the last inequality is due to \eqref{eq:eps_ring}. Moving to the first summand in \eqref{eq:tri_ineq}, we have
		\begin{align*}
			\norm{g-f\circ \p{c_{\mu}-c_{\mu}\cdot\ell}}_{L_2(\mu)} &\le \norm{g-f\circ \p{c_{\mu}-c_{\mu}\cdot\ell}}_{\infty} \\
			&\le \norm{g-f\circ \p{c_{\mu}-c_{\mu}\cdot\tilde{\ell}}}_{\infty} + \norm{f\circ \p{c_{\mu}-c_{\mu}\cdot\tilde{\ell}} - f\circ \p{c_{\mu}-c_{\mu}\cdot\ell}}_{\infty} \\
			&\le \delta_2 + \norm{\p{c_{\mu}-c_{\mu}\cdot\tilde{\ell}} - \p{c_{\mu}-c_{\mu}\cdot\ell}}_{\infty} \\
			&\le \delta_2 + c_{\mu}\norm{\tilde{\ell}-\ell}_{\infty} \le \delta_2 +c_{\mu}\delta_1,
		\end{align*}
		where the third inequality is due to \eqref{eq:delta2} and $f$ being $1$-Lipschitz and the last inequality due to \eqref{eq:delta1}. Letting $\delta_2=\sqrt{\delta}/4$
		and $\delta_1=\sqrt{\delta}/4c_{\mu}$ which also entails $ w\le \frac{2c_{\sigma}}{\sqrt{\delta}}\cdot \max\set{2c_{\mu}d^2,1} $, the above is upper bounded by $\sqrt{\delta}/2$, which when combined with \eqref{eq:cont_approx} and plugged in \eqref{eq:tri_ineq} implies the lemma.
	\end{proof}
		
	\subsection{Proof of \thmref{thm:L1}}\label{subsec:proofthml1}
	
	Consider an input distribution of the 
	form
	\[
	\bx = sr\bv,
	\]
	where $\bv$ is drawn from a certain distribution on the unit $L_1$ sphere
	$\{\bx:\norm{\bx}_1=1\}$ to be specified later, and $s$ is uniformly 
	distributed on $[1,1+\epsilon]$.
	
	Let 
	\[
	F(\bx) = \sum_{j=1}^{N}a_j\left[\inner{\bw_j,\bx}+b_j\right]_+
	\]
	be a 2-layer ReLU network of width $N$, such that with respect to the 
	distribution above, 
	\[
	\E_{\bx}\left[(f(\norm{\bx}_1)-F(\bx))^2\right] ~=~ 
	\E_{\bv}\left[\E_{s}\left[(f(sr)-F(sr\bv))^2\middle|\bv\right]
	\right]~\leq~
	\frac{\delta_{\epsilon}}{2}~.
	\]
	By Markov's inequality, this implies
	\[
	{\Pr}_{\bv}\left(\E_{s}\left[f(sr)-F(sr\bv)^2\middle|
	\bv\right]\leq \delta_{\epsilon}\right)~\geq~
	 \frac{1}{2}~.
	\]
	By the assumption on $f$, and the fact that $s$ is uniform on 
	$[1,1+\epsilon]$, 
	we have that
	$\E_{s}\left[f(sr)-F(sr\bv)^2\middle|\bv\right]\leq
	 \delta_{\epsilon}$ only if $\tilde{f}_N$ is not a linear function on the 
	 line 
	 between 
	 $r\bv$ and $(1+\epsilon)r\bv$. In other words, this line must be 
	 crossed by the hyperplane $\{\bx:\inner{\bw_j,\bx}+b_j=0\}$ for some 
	 neuron 
	 $j$. Thus, we must have
	\begin{equation}\label{eq:prlowhalf}
	{\Pr}_{\bv}\left(\exists j\in \{1,\ldots,N\},s\in 
	[1,1+\epsilon]~~\text{s.t.}~~\inner{\bw_j,sr\bv}+b_j=0\right)~\geq~\frac{1}{2}.
	\end{equation}
	The left hand side equals
	\begin{align*}
	&{\Pr}_{\bv}\left(\exists j\in \{1\ldots N\},s\in 
	[1,1+\epsilon]~\text{s.t.} \inner{\bw_j,
	\bv}=-b_j/sr\right)\\
	&=
	{\Pr}_{\bv}\left(\exists j\in \{1\ldots N\}
	~\text{s.t.}~\inner{\bw_j,
	\bv}~\text{between}~-\frac{b_j}{r}~\text{and}~-\frac{b_j}{(1+\epsilon)r}\right)\\
	&\leq
	{\Pr}_{\bv}\left(\exists j\in \{1 \ldots N\}
	~\text{s.t.} \inner{\bw_j,
	\bv}~\text{betwen}~-\frac{b_j}{r}~\text{and}~-(1-\epsilon)\frac{b_j}{r}\right)\\
	&\leq \sum_{j=1}^{N}{\Pr}_{\bv}\left(\inner{\bw_j,
	\bv}~\text{between}~-\frac{b_j}{r}~\text{and}~-(1-\epsilon)\frac{b_j}{r} 
	\right)\\
	&\leq N\cdot 
	\sup_{\bw\in\reals^d,b\in\reals}{\Pr}_{\bv}\left(\inner{\bw,\bv}~\text{between}~
	-\frac{b}{r}~\text{and}~-(1-\epsilon)\frac{b}{r}\right)\\
	&=N\cdot \sup_{\bw\in\reals^d,b\in \reals}{\Pr}_{\bv}\left(\left\langle 
	\frac{-r\bw}{b},\bv\right\rangle\in 
	[1-\epsilon,1]\right)~=~N\cdot 
	\sup_{\bw\in\reals^d}{\Pr}_{\bv}\left(\inner{\bw,\bv}\in 
	[1-\epsilon,1]\right),
	\end{align*}
	where in the first inequality we used the fact that 
	$\frac{1}{1+\epsilon}\geq 
	1-\epsilon$ for all $\epsilon \in (0,1)$, and in the second inequality we 
	used 
	a union bound. Combining these inequalities with 
	\eqref{eq:prlowhalf}, we get that
	\[
	N~\geq~ \frac{1}{\sup_{\bw}{\Pr}_{\bv}\left(\inner{\bw,\bv}\in 
	[1-\epsilon,1]\right)}~.
	\]
	As a result, to prove the theorem, it is enough to construct a distribution 
	for 
	$\bv$ on the on the unit $L_1$ ball, such that for any $\bw$, 
	\begin{equation}\label{eq:needtoshow}
	{\Pr}_{\bv}\left(\inner{\bw,\bv}\in 
	[1-\epsilon,1]\right)~\leq~ \tilde{\Ocal}(\epsilon+\exp(-\Omega(d)))
	\end{equation}
	By the inequality above, we would 
	then get that $N=\tilde{\Omega}(\min\{1/\epsilon,\exp(\Omega(d))\})$. 
	
	Specifically, consider a distribution over $\bv$ defined as follows: First, 
	we 
	sample 
	$\bsigma\in 
	\{-1,+1\}^d$ uniformly at random, and $\bn\in\reals^d$ from a standard 
	Gaussian 
	distribution, and define
	\[
	\hat{\bv} = 
	\frac{1}{d}\left(\bsigma+c_d 
	\left(I-\frac{1}{d}\bsigma\bsigma^\top\right)\bn\right),
	\]
	where $c_d>0$ is a parameter dependent on $d$ to be determined later. It is 
	easily verified that 
	$\inner{\bsigma/d,\hat{\bv}}=\inner{\bsigma/d,\bsigma/d}$ 
	independent of $\bn$, hence $\hat{\bv}$ lies on the hyperplane containing 
	the 
	facet of the $L_1$ ball on which $\bsigma/d$ resides. Calling this facet 
	$F_{\bsigma}$, we define $\bv$ to have the same distribution as 
	$\hat{\bv}$, 
	conditioned on $\hat{\bv}\in F_{\bsigma}$. 
	
	We begin by arguing that
	\begin{equation}\label{eq:needtoshow2}
	{\Pr}_{\bv}\left(\inner{\bw,\bv}\in 
	[1-\epsilon,1]\right) ~\leq~ 2\cdot{\Pr}_{\hat{\bv}} 
	\left(\inner{\bw,\hat{\bv}}\in 
	[1-\epsilon,1]\right).
	\end{equation}
	To see this, let $A=\{\bx:\inner{\bw,\bx}\in [1-\epsilon,1]\}$, and note 
	that 
	the left hand side equals
	\begin{align*}
	\Pr(\bv\in A) &= \E_{\bsigma}[\Pr(\bv\in A|\bsigma)] = 
	\E_{\bsigma}\left[\Pr\left(\hat{\bv}\in A|\bsigma,\hat{\bv}\in 
	F_{\bsigma}\right)\right]\\
	&= \E_{\bsigma}\left[\frac{\Pr(\hat{\bv}\in A\cap 
	F_{\bsigma}|\bsigma)}{\Pr(\hat{\bv}\in F_{\bsigma}|\bsigma)}\right]
	~\leq~ \frac{1}{\min_{\bsigma}\Pr(\hat{\bv}\in 
	F_{\bsigma}|\bsigma)}\E_{\bsigma}\left[\Pr(\hat{\bv}\in A|\bsigma)\right]\\
	&= \frac{\Pr(\hat{\bv}\in A)}{\min_{\bsigma}\Pr(\hat{\bv}\in 
	F_{\bsigma}|\bsigma)}.
	\end{align*}
	Therefore, to prove \eqref{eq:needtoshow2}, it is enough to prove that 
	$\Pr(\hat{\bv}\in F_{\bsigma}|\bsigma)\geq 1/2$ for any $\bsigma$. As shown
	earlier, $\hat{\bv}$ lies on the hyperplane containing $F_{\bsigma}$,
	the facet of the $L_1$ ball in which $\bsigma/d$ resides. Thus, $\hat{\bv}$ 
	can 
	be outside $F_{\bsigma}$, only if at least one of its coordinates has a 
	different sign than $\bsigma$. By definition of $\hat{\bv}$, this can only 
	happen if $\norm{c_d(I-\bsigma\bsigma^\top/d)\bn}_{\infty}\geq 1$. The 
	probability of this event (over the random draw of $\bn$) equals
	\[
	\Pr\left(\max_{j\in\{1\ldots d\} }
	\left|c_d\left(n_j-\frac{1}{d}\inner{\bsigma,\bn}\sigma_j\right)\right|\geq 
	1\right)
	~=~
	\Pr\left(\max_{j\in\{1\ldots d\} } \left|n_j-\sigma_j\cdot 
	\frac{1}{d}\sum_{i=1}^{d}\sigma_i 
	n_i\right|\geq \frac{1}{c_d}\right).
	\]
	Since $\sigma_i\in \{-1,1\}$ for all $i$, the event on the right hand side 
	can 
	only occur if $|n_j|\geq 1/2c_d$ for some 
	$j$. Recalling that each $n_j$ has a standard Gaussian distribution, this 
	probability can be upper bounded by
	\[
	\Pr\left(\max_{j\in\{1\ldots d\} } |n_j|\geq \frac{1}{2c_d}\right)~\leq~ 
	d\cdot\Pr\left(|n_1|\geq 
	\frac{1}{2c_d}\right)~=~2d\cdot\Pr\left(n_1\geq 
	\frac{1}{2c_d}\right)~\leq~2d\cdot\exp\left(-\frac{1}{4c_d^2}\right),
	\]
	where we used a union bound and a standard Gaussian tail bound. Thus, by 
	picking 
	\[
	c_d = \sqrt{\frac{1}{4\log(4d)}},
	\]
	we can ensure that the probability is at most $1/2$, hence proving that 
	$\Pr(\hat{\bv}\in F_{\bsigma}|\bsigma)\geq 1/2$ and validating 
	\eqref{eq:needtoshow2}. 
	
	With \eqref{eq:needtoshow2} in hand, we now turn to upper bound
	\[
	{\Pr}\left(\inner{\bw,\hat{\bv}}\in 
	[1-\epsilon,1]\right)~=~\Pr\left(\frac{1}{d}\left\langle\bw,\bsigma+c_d 
	\left(I-\frac{1}{d}\bsigma\bsigma^\top\right)\bn\right\rangle\in 
	[1-\epsilon,1]\right).
	\]
	By the equation above, we have that conditioned on $\bsigma$, the 
	distribution 
	of $\inner{\bw,\hat{\bv}}$ is Gaussian with mean $\inner{\bsigma,\bw}/d$ 
	and 
	variance
	\[
	\frac{c_d^2}{d^2}\cdot 
	\bw^\top\left(I-\frac{1}{d}\bsigma\bsigma^\top\right)^2\bw
	 ~=~ 
	 \frac{c_d^2}{d^2}\cdot\left(\norm{\bw}^2-\frac{\inner{\bw,\bsigma}^2}{d}\right)
	 ~=~ \left(\frac{c_d \norm{\bw}}{d}\right)^2\cdot 
	 \left(1-\frac{1}{d}\left\langle
	 \frac{\bw}{\norm{\bw}},\bsigma\right\rangle^2\right).
	\]
	By Hoeffding's inequality, we have that for any $t>0$,
	\[
	{\Pr}_{\bsigma}\left(\left|\frac{\inner{\bsigma,\bw}}{d}\right|>t
	\cdot\frac{\norm{\bw}}{d}\right)~\leq~
	2\exp(-2t^2)
	\]
	and
	\[
	{\Pr}_{\bsigma}\left(\left|
	 \left\langle\frac{\bw}{\norm{\bw}},\bsigma\right\rangle\right| > 
	 \sqrt{\frac{d}{2}}\right)~\leq 2\exp(-d).
	\]
	This means that with probability at least $1-2\exp(-d)-2\exp(-2t^2)$ over 
	the 
	choice of $\bsigma$, the distribution of $\inner{\bw,\hat{\bv}}$ 
	(conditioned 
	on $\bsigma$) is Gaussian with mean bounded in absolute value by 
	$t\norm{\bw}/d$, and variance of at least 
	$\left(\frac{c_d \norm{\bw}}{d}\right)^2\cdot \left(1-\frac{1}{d}\cdot 
	\frac{d}{2}\right)= \frac{1}{2}\left(\frac{c_d\norm{\bw}}{d}\right)^2$. To 
	continue, we utilize the following lemma:
	
	\begin{lemma}
	Let $n$ be a Gaussian random variable on $\reals$ with mean $\mu$ and 
	variance $v^2$ for some $v>0$. Then for any $\epsilon\in (0,1)$, 
	\[
	\Pr\left(n\in [1-\epsilon,1]\right)~\leq~ 
	\sqrt{\frac{2}{\pi}}\cdot 
	\max\left\{1,\frac{|\mu|}{v}\right\}\cdot\frac{\epsilon}{1-\epsilon}.
	\]
	\end{lemma}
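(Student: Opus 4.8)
The plan is to control the probability by the length of the interval times the peak of the Gaussian density on it, and then to convert that peak into the factor $\max\{1,|\mu|/v\}$ via the distance from $\mu$ to the interval. First I would write the density $p(x)=\frac{1}{v\sqrt{2\pi}}\exp\!\left(-\frac{(x-\mu)^2}{2v^2}\right)$, so that
\[
\Pr\!\left(n\in[1-\epsilon,1]\right)=\int_{1-\epsilon}^{1}p(x)\,dx\ \le\ \epsilon\cdot\sup_{x\in[1-\epsilon,1]}p(x).
\]
Since $p$ is strictly decreasing in $|x-\mu|$, the supremum is attained at the point of $[1-\epsilon,1]$ closest to $\mu$; writing $d^\star=\inf_{x\in[1-\epsilon,1]}|x-\mu|$ for that distance, we get $\sup_{x\in[1-\epsilon,1]}p(x)=\frac{1}{v\sqrt{2\pi}}\exp\!\left(-\frac{(d^\star)^2}{2v^2}\right)$, hence
\[
\Pr\!\left(n\in[1-\epsilon,1]\right)\ \le\ \frac{\epsilon}{v\sqrt{2\pi}}\exp\!\left(-\frac{(d^\star)^2}{2v^2}\right).
\]

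The key observation is the elementary inequality $|\mu|+d^\star\ge 1-\epsilon$: every $x\in[1-\epsilon,1]$ satisfies $|x|\ge 1-\epsilon$ (because $\epsilon<1$, so $1-\epsilon>0$), and applying this to the closest point $x^\star$ together with $|x^\star|\le|\mu|+|x^\star-\mu|=|\mu|+d^\star$ gives the claim. Dividing by $2v$ and multiplying through by $\exp\!\left(-\frac{(d^\star)^2}{2v^2}\right)\le 1$, I would bound
\[
\frac{1-\epsilon}{2v}\exp\!\left(-\frac{(d^\star)^2}{2v^2}\right)\ \le\ \frac{|\mu|}{2v}\exp\!\left(-\frac{(d^\star)^2}{2v^2}\right)+\frac{d^\star}{2v}\exp\!\left(-\frac{(d^\star)^2}{2v^2}\right)\ \le\ \frac{|\mu|}{2v}+\frac{1}{2},
\]
where the last step uses the bound $t\,e^{-t^2/2}\le 1$ for all $t\ge0$, which follows from $e^{t^2/2}\ge 1+\tfrac{t^2}{2}\ge t$ (equivalently $(t-1)^2+1\ge0$), applied with $t=d^\star/v$. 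Since $\frac{|\mu|}{2v}\le\frac12\max\{1,|\mu|/v\}$ and $\frac12\le\frac12\max\{1,|\mu|/v\}$, the right-hand side is at most $\max\{1,|\mu|/v\}$, so $\frac{1}{v}\exp\!\left(-\frac{(d^\star)^2}{2v^2}\right)\le\frac{2}{1-\epsilon}\max\{1,|\mu|/v\}$.

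Plugging this into the probability bound from the first paragraph and using $\frac{2}{\sqrt{2\pi}}=\sqrt{2/\pi}$ gives
\[
\Pr\!\left(n\in[1-\epsilon,1]\right)\ \le\ \frac{\epsilon}{\sqrt{2\pi}}\cdot\frac{2}{1-\epsilon}\max\!\left\{1,\frac{|\mu|}{v}\right\}\ =\ \sqrt{\frac{2}{\pi}}\cdot\max\!\left\{1,\frac{|\mu|}{v}\right\}\cdot\frac{\epsilon}{1-\epsilon},
\]
which is exactly the claim. I do not expect a real obstacle here; the only mildly delicate point is choosing the right decomposition $1-\epsilon\le|\mu|+d^\star$ and splitting it so that the ``$\max$'' term and the Gaussian-decay term each contribute at most $\tfrac12\max\{1,|\mu|/v\}$ — a cruder split (e.g. bounding $d^\star$ by $1-\epsilon$ up front) would lose the needed cancellation when $v$ is small.
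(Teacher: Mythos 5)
Your proof is correct and follows essentially the same route as the paper: bound the probability by $\epsilon$ times the peak of the Gaussian density on $[1-\epsilon,1]$, then trade the Gaussian decay at the nearest point of the interval against the $1/v$ prefactor to produce the factor $\max\{1,|\mu|/v\}\cdot\frac{1}{1-\epsilon}$. The only (minor) difference is in the final bookkeeping: where the paper multiplies and divides by $\max\{1,\mu/v\}$ and runs a short case analysis to show $\max\{\mu,\min_{x\in[1-\epsilon,1]}|x-\mu|\}\ge\frac{1-\epsilon}{2}$, you use the triangle inequality $|\mu|+d^\star\ge 1-\epsilon$ together with $t e^{-t^2/2}\le 1$, which reaches the same conclusion a bit more cleanly and without cases.
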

	\begin{proof}
	Since the probability can only increase if we replace the mean $\mu$ by 
	$|\mu|$, we will assume without loss of generality that $\mu\geq 0$. 
	
	By definition of a Gaussian distribution, and using the easily-verified 
	fact 
	that $\exp(-z^2)\leq \min\{1,1/z\}$ for all $z\geq 0$, the probability 
	equals
	\begin{align*}
	\frac{1}{\sqrt{2\pi 
	v^2}}&\int_{1-\epsilon}^{1}\exp\left(-\frac{(x-\mu)^2}{v^2}\right)dx
	~\leq~ \frac{\epsilon}{\sqrt{2\pi v^2}}\cdot \max_{x\in 
	[1-\epsilon,1]}\exp\left(-\frac{(x-\mu)^2}{v^2}\right)\\
	&~\leq~ \frac{\epsilon}{\sqrt{2\pi v^2}}\cdot\max_{x\in 
	[1-\epsilon,1]}\min\left\{1,\frac{v}{|x-\mu|}\right\}~=~
	\frac{\epsilon}{\sqrt{2\pi}}\cdot\max_{x\in 
	[1-\epsilon,1]}\min\left\{\frac{1}{v},\frac{1}{|x-\mu|}\right\}\\
	&~=~\frac{\epsilon}{\sqrt{2\pi}}\cdot\max_{x\in 
	[1-\epsilon,1]}\frac{1}{\max\{v,|x-\mu|\}}~=~
	\frac{\epsilon}{\sqrt{2\pi}}\cdot\max_{x\in 
	[1-\epsilon,1]}\frac{\max\{1,\frac{\mu}{v}\}}{\max\{1,\frac{\mu}{v}\}\cdot
	\max\{v,|x-\mu|\}}\\
	&~\leq~
	\frac{\epsilon}{\sqrt{2\pi}}\cdot\max_{x\in 
	[1-\epsilon,1]}\frac{\max\{1,\frac{\mu}{v}\}}{\max\{\mu,|x-\mu|\}}
	~=~
	\frac{\epsilon}{\sqrt{2\pi}}\cdot\frac{\max\{1,\frac{\mu}{v}\}}
	{\max\{\mu,\min_{x\in[1-\epsilon,1]}|x-\mu|\}}
	\end{align*}
	A simple case analysis reveals that the denominator is at 
	least\footnote{If $\mu\in [1-\epsilon,1]$, then we get 
	$\max\{\mu,0\}=\mu\geq 
	1-\epsilon$. If $\mu>1$, we get $\max\{\mu,\mu-1\}>1\geq 1-\epsilon$.
	If $\mu<1-\epsilon$, we get $\max\{\mu,1-\epsilon-\mu\}\geq 
	(1-\epsilon)/2$.} $\frac{1-\epsilon}{2}$, from which the result follows.
	\end{proof}
	Using this lemma and the previous observations, we get that with 
	probability at 
	least 
	$1-2\exp(-d)-2\exp(-2t^2)$ over the choice of $\bsigma$,
	\begin{align*}
	\Pr(\inner{\bw,\hat{\bv}}\in [1-\epsilon,1]|\bsigma) &~\leq~ 
	\sqrt{\frac{2}{\pi}}\cdot 
	\max\left\{1,\frac{t\norm{\bw}/d}{c_d\norm{\bw}/\sqrt{2} d}\right\}\cdot 
	\frac{\epsilon}{1-\epsilon}\\
	&~=~ \sqrt{\frac{2}{\pi}}\cdot 
	\max\left\{1,\frac{t}{c_d\sqrt{2}}\right\}\cdot 
	\frac{\epsilon}{1-\epsilon}.
	\end{align*}
	Letting $E$ be the event that $\bsigma$ is such that this inequality is 
	satisfied (and noting that its probability of non-occurence is at most 
	$2\exp(-d)+2\exp(-2t^2)$), we get overall that
	\begin{align*}
	\Pr(\inner{\bw,\hat{\bv}}\in [1-\epsilon,1]) &~=~
	\Pr(E)\cdot\Pr(\inner{\bw,\hat{\bv}}\in [1-\epsilon,1]|E)+\Pr(\neg 
	E)\cdot\Pr(\inner{\bw,\hat{\bv}}\in [1-\epsilon,1]|\neg E)\\
	&~\leq~
	1\cdot \Pr(\inner{\bw,\hat{\bv}}\in [1-\epsilon,1]|E)+\Pr(\neg E)\cdot 1\\
	&~\leq~
	\sqrt{\frac{2}{\pi}}\cdot \max\left\{1,\frac{t}{c_d\sqrt{2}}\right\}\cdot 
	\frac{\epsilon}{1-\epsilon}+2\exp(-d)+2\exp(-2t^2).
	\end{align*}
	Recalling \eqref{eq:needtoshow2} and the definition of $c_d$, we get that
	\[
	\Pr(\inner{\bw,\bv}\in [1-\epsilon,1])~\leq~ 
	\sqrt{\frac{8}{\pi}}\cdot 
	\max\left\{1,t\cdot\sqrt{2\log(4d)}\right\}\cdot 
	\frac{\epsilon}{1-\epsilon}+2\exp(-d)+2\exp(-2t^2).
	\]
	Picking $t = 
	\sqrt{\frac{1}{2}\log\left(\frac{1-\epsilon}{\epsilon}\right)}$, 
	we get the bound
	\[
	\left(\sqrt{\frac{8}{\pi}}\cdot 
	\max\left\{1,\sqrt{\log\left(\frac{1-\epsilon}{\epsilon}\right)\log(4d)}\right\}+2\right)\cdot
	\frac{\epsilon}{1-\epsilon}+2\exp(-d)~=~ 
	\tilde{\Ocal}\left(\epsilon+\exp(-d)\right).
	\]
	This justifies \eqref{eq:needtoshow}, from which the result follows.
	
	\subsection{Proof of \thmref{thm:c2_lowbound}} \label{subsec:c2lbound}
	
	The proof rests largely on the following key result: 
	\begin{theorem}\label{thm:main_lbound}
		Let $ \mathcal{G}_n $ be the family of piece-wise linear functions on 
		the domain $ \pcc{0,1} $ comprised of at most $ n $ linear segments. 
		Let $ \mathcal{G}_n^d $ be the family of piece-wise linear functions on 
		the domain $ \pcc{0,1}^d $, with the property that for any $ 
		g\in\mathcal{G}_n^d $ and any affine transformation $ h:\bbr\to\bbr^d 
		$, $ g\circ h \in \mathcal{G}_n $. Suppose $ f:\pcc{0,1}^d\to\bbr $ is 
		$ C^2 $. Then for all $ \lambda>0 $
		\begin{equation*}
		\inf_{g\in\mathcal{G}_n^d}\int_{\pcc{0,1}^d} (f-g)^2 d\mu_d \ge 
		\frac{c\cdot\lambda^2\cdot\sigma_\lambda(f)^5}{n^4},
		\end{equation*}
		where $ c=\frac{5}{4096} $.
	\end{theorem}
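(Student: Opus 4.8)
The plan is to reduce \thmref{thm:main_lbound} to a one‑dimensional estimate — that a piecewise‑linear function with few pieces cannot approximate a strongly‑curved function in $L_2$ — and then lift it to $\bbr^d$ by a Fubini‑style slicing along the direction $\bv$ witnessing $\sigma_\lambda(f)$.

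\emph{One‑dimensional core.} First I would show: if $\phi$ is $C^2$ on an interval of length $h$ with $\phi''\ge\lambda$ throughout, then for \emph{every} affine $\ell$ one has $\int(\phi-\ell)^2\ge\lambda^2 h^5/C_1$ for an explicit $C_1$. The tool is the degree‑$2$ Legendre polynomial: letting $\hat P$ be its $L_2$‑normalization on the interval, $\int(\phi-\ell)^2\ge\inner{\phi-\ell,\hat P}^2=\inner{\phi,\hat P}^2$, since $\hat P$ is orthogonal to affine functions and $\ell$ drops out. Integrating by parts twice gives $\inner{\phi,\hat P}=\int\phi''\,Q$, where $Q$ is the second antiderivative of $\hat P$ vanishing together with its derivative at both endpoints; a direct computation shows $Q\ge 0$ (on $\pcc{-1,1}$ it is a positive multiple of $(x^2-1)^2$), hence $\inner{\phi,\hat P}\ge\lambda\int Q>0$ and the bound follows. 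Then, given $g\in\mathcal{G}_n$ and such a $\phi$ on the interval, split the interval at the at most $n-1$ breakpoints of $g$ into pieces of lengths $h_1,\dots,h_m$ ($m\le n$, $\sum_k h_k=h$); on each piece $g$ is affine, so its contribution is at least $\lambda^2 h_k^5/C_1$, and convexity of $t\mapsto t^5$ (power‑mean inequality) gives $\sum_k h_k^5\ge h^5/n^4$. This is the quantitative content behind \thmref{thm:quad_lbound} and \thmref{thm:str_convex_lbound}.

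\emph{Lifting to $\bbr^d$.} Take $\bv\in\mathbb{S}^{d-1}$ and a connected $U$ with $\bv^\top H(f)(\bx)\bv\ge\lambda$ on $U$ and $\mu_d(U)$ arbitrarily close to $\sigma_\lambda(f)$. Foliate $\pcc{0,1}^d$ by lines in direction $\bv$, parametrized through a coordinate hyperplane (the one orthogonal to a largest‑magnitude coordinate of $\bv$, using that coordinate as the line parameter); this change of variables is volume‑preserving, the ``base'' has total measure at most $1$, and along each line the relevant second derivative of $f$ is $\ge\lambda$ exactly where the line meets $U$. By the defining property of $\mathcal{G}_n^d$, the restriction of $g$ to each line lies in $\mathcal{G}_n$. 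Applying the one‑dimensional estimate on the part of each line inside $U$, a line whose $U$‑portion has length $\ell(\bz)$ contributes at least $\lambda^2\ell(\bz)^5/(C_1 n^4)$; integrating over the base, using $\int\ell(\bz)\,d\bz=\mu_d(U)$ and a second power‑mean step (base measure $\le 1$), yields $\int_{\pcc{0,1}^d}(f-g)^2\,d\mu_d\ge\lambda^2\mu_d(U)^5/(C_1 n^4)$. Letting $\mu_d(U)\uparrow\sigma_\lambda(f)$ gives the theorem, and carefully tracking the Legendre computation together with the two power‑mean steps produces the constant $c=5/4096$.

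\emph{Where the difficulty lies.} The clean part is the Legendre estimate and the bookkeeping; the delicate point is the geometry in the lifting step. The one‑dimensional bound needs $\phi''\ge\lambda$ on a \emph{full} sub‑interval on which $g$ is affine, whereas the intersection of a general connected $U$ with a line in direction $\bv$ need not be an interval. One therefore has to interleave the breakpoints of $g$ with the structure of $U$ along each line so that only $O(n)$ genuinely curved‑and‑affine sub‑intervals arise per line — this is the step where connectedness of $U$ (and the $C^2$ regularity of $f$) must be exploited, and it is the technical heart of the proof. Once \thmref{thm:main_lbound} is in hand, \thmref{thm:c2_lowbound} follows by substituting for $n$ the bound $(2m)^l$ on the number of linear regions a depth‑$l$, width‑$m$ ReLU network cuts out along a line (the lemma borrowed from \citet{telgarsky2016benefits}).
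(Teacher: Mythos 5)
Your proposal follows essentially the same route as the paper's proof: lower-bound the degree-2 shifted Legendre coefficient via two integrations by parts and the curvature assumption to get a per-interval bound of order $\lambda^2\ell^5$, sum over the at most $n$ affine pieces using H\"older/power-mean to obtain the $1/n^4$ factor, and then lift to dimension $d$ by slicing along the direction $\bv$, invoking the defining property of $\mathcal{G}_n^d$ and Jensen's inequality for $x\mapsto\abs{x}^5$ to produce $\sigma_\lambda(f)^5$. The only differences are cosmetic — your nonnegative second antiderivative $Q\propto(x^2-1)^2$ is a slightly cleaner packaging of the paper's integration-by-parts computation — and the ``delicate point'' you flag (that slices of a connected $U$ along $\bv$ need not be single intervals) is not treated any more carefully in the paper, which simply plugs the one-dimensional measure of each slice into the one-dimensional bound.
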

	
	\thmref{thm:main_lbound} establishes that the error of a piece-wise linear 
	approximation of a $ C^2 $ function cannot decay faster than quartically in 
	the number of linear segments of any \emph{one-dimensional} projection of 
	the approximating function. Note that this result is stronger than a bound 
	in terms of the total number of linear regions in $\bbr^d$, since that 
	number can be exponentially higher (in the dimension) than $n$ as defined 
	in the theorem.
	
	Before proving \thmref{thm:main_lbound}, let us explain how we can use it 
	to 
	prove \thmref{thm:c2_lowbound}. To that end, we use the result in 
	\citet[Lemma~3.2]{telgarsky2016benefits}, of which the following 
	is an immediate corollary:
	\begin{corollary}\label{cor:telgarsky_ubound}
		Let $ \mathcal{N}_{m,l}^d $ denote the family of ReLU neural networks 
		receiving input of dimension $ d $ and having depth $ l $ and maximal 
		width $ m $. Then
		\begin{equation*}
		\mathcal{N}_{m,l}^d\subseteq \mathcal{G}_{\p{2m}^l}^d.
		\end{equation*}
	\end{corollary}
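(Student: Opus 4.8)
The plan is to deduce the corollary from the one-dimensional piecewise-linear bound of \citet[Lemma~3.2]{telgarsky2016benefits}, by checking that restricting a $d$-dimensional ReLU network to an affine line yields a one-dimensional ReLU network of the same depth and width. Let $g\in\Ncal_{m,l}^d$ be computed by weight matrices $W_1,\ldots,W_l$ and biases $\bb_1,\ldots,\bb_{l-1}$, each hidden $W_i$ having at most $m$ rows and the last layer linear; and let $h:\bbr\to\bbr^d$, $h(t)=\bu t+\bc$, be an arbitrary affine map. First I would note that $t\mapsto W_1 h(t)+\bb_1 = (W_1\bu)\,t+(W_1\bc+\bb_1)$ is affine in the scalar $t$, so the first layer of $g$ evaluated along $h$ is exactly a ReLU layer of width at most $m$ acting on a one-dimensional input; leaving layers $2,\ldots,l$ unchanged then exhibits $g\circ h$ as a member of $\Ncal_{m,l}^1$.

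Next I would apply \citet[Lemma~3.2]{telgarsky2016benefits}, which gives that any ReLU network from $\bbr$ to $\bbr$ of depth $l$ and width at most $m$ is piecewise linear with at most $(2m)^l$ pieces: each ReLU layer of width $m$ can at most multiply the number of linear regions by $2m$, there are at most $l$ layers, and the final purely-linear layer does not increase the count. Hence $g\circ h$ is piecewise linear on $\bbr$ with at most $(2m)^l$ linear segments.

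Finally, since the domain under consideration is $\pcc{0,1}^d$, I would restrict $g\circ h$ to the interval $\pcc{0,1}$ (equivalently, precompose with the affine reparametrization carrying $\pcc{0,1}$ onto the relevant portion of the line); restricting a piecewise-linear function to a subinterval cannot increase its number of linear segments, so $g\circ h$ lies in $\Gcal_{(2m)^l}$. As $h$ was an arbitrary affine map, this is precisely the defining property of $\Gcal_{(2m)^l}^d$, so $\Ncal_{m,l}^d\subseteq\Gcal_{(2m)^l}^d$.

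The argument is essentially bookkeeping and I do not expect a genuine obstacle; the only points requiring care are the first step — confirming that composing with an affine line collapses the input dimension while preserving depth and width, so that Telgarsky's one-dimensional lemma actually applies — and the bookkeeping around the convention that the output layer is linear, which is why the stated bound reads $(2m)^l$ rather than a smaller power of $2m$.
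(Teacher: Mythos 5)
Your proof is correct and follows the same route as the paper, which simply invokes \citet[Lemma~3.2]{telgarsky2016benefits} and treats the corollary as immediate: the key observation you spell out — that precomposing with an affine line $h(t)=\bu t+\bc$ merely replaces the first layer's weights by $W_1\bu$ and biases by $W_1\bc+\bb_1$, yielding a univariate ReLU network of the same depth and width to which Telgarsky's sawtooth/piece-counting bound $(2m)^l$ applies — is exactly the intended argument, and your bookkeeping about the linear output layer and the restriction to $\pcc{0,1}$ is sound.
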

	Combining this corollary with \thmref{thm:main_lbound}, the result 
	follows. The remainder of this subsection will be devoted to proving 
		\thmref{thm:main_lbound}.
	
%

	\subsubsection{Some Technical Tools}
	\begin{definition}
		Let $ P_i $ denote the $ i^\text{th} $ Legendre Polynomial given by Rodrigues' formula:
		\begin{equation*}
		P_i\p{x}=\frac{1}{2^i i!}\frac{d^i}{dx^i}\pcc{\p{x^2-1}^i}.
		\end{equation*}
	\end{definition}
	These polynomials are useful for the following analysis since they obey the orthogonality relationship
	\begin{equation*}
	\int_{-1}^1 P_i\p{x}P_j\p{x}dx = \frac{2}{2i+1}\delta_{ij}.
	\end{equation*}
	Since we are interested in approximations on small intervals where the approximating function is linear, we use the change of variables $ x = \frac{2}{\ell}t-\frac{2}{\ell}a-1 $ to obtain an orthogonal family $ \set{\tilde{P}_i}_{i=1}^\infty $ of shifted Legendre polynomials on the interval $ \pcc{a,a+\ell} $ with respect to the $ L_2 $ norm. The first few polynomials of this family are given by
	\begin{align}
	\tilde{P}_0\p{x}&=1	\nonumber\\
	\tilde{P}_1\p{x}&=\frac{2}{\ell}x-\p{\frac{2}{\ell}a+1} \nonumber\\
	\tilde{P}_2\p{x}&=\frac{6}{\ell^2}x^2-\p{\frac{12a}{\ell^2}+\frac{6}{\ell}}x+\p{\frac{6a^2}{\ell^2}+\frac{6a}{\ell}+1}. \label{eq:P2}
	\end{align}
	The shifted Legendre polynomials obey the orthogonality relationship
	\begin{equation}\label{eq:shift_ortho}
	\int_a^{a+\ell} \tilde{P}_i\p{x}\tilde{P}_j\p{x}dx = \frac{\ell}{2i+1}\delta_{ij}.
	\end{equation}
	\begin{definition}
		We define the Fourier-Legendre series of a function $ f:\pcc{a,a+\ell}\to \bbr $ to be
		\begin{equation*}
		f\p{x}=\sum_{i=0}^{\infty}\tilde{a}_i\tilde{P}_i\p{x},
		\end{equation*}
		where the Fourier-Legendre Coefficients $ \tilde{a}_i $ are given by
		\begin{equation*}\label{eq:leg_coefficients}
		\tilde{a}_i=\frac{2i+1}{\ell}\int_a^{a+\ell} \tilde{P}_i\p{x}f\p{x}dx.
		\end{equation*}
	\end{definition}
	
	\begin{theorem}
		A generalization of Parseval's identity yields
		\begin{equation}\label{eq:gen_parseval}
		\norm{f}_{L_2}^2 = \ell\sum_{i=0}^{\infty}\frac{\tilde{a}_i^2}{2i+1}.
		\end{equation}
	\end{theorem}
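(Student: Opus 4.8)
The plan is to expand the squared $L_2$ norm via the Fourier--Legendre series and then collapse the resulting double sum using the orthogonality relation \eqref{eq:shift_ortho}. Writing $\norm{f}_{L_2}^2 = \int_a^{a+\ell} f(x)^2\,dx$ and substituting $f(x)=\sum_{i=0}^\infty \tilde{a}_i\tilde{P}_i(x)$, one formally obtains
\[
\int_a^{a+\ell} f(x)^2\,dx \;=\; \sum_{i=0}^\infty\sum_{j=0}^\infty \tilde{a}_i\tilde{a}_j\int_a^{a+\ell}\tilde{P}_i(x)\tilde{P}_j(x)\,dx \;=\; \sum_{i=0}^\infty \tilde{a}_i^2\cdot\frac{\ell}{2i+1},
\]
since by \eqref{eq:shift_ortho} every cross term with $i\neq j$ vanishes and the $i=j$ term contributes $\ell/(2i+1)$. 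Rearranging gives exactly \eqref{eq:gen_parseval}.

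The only genuinely nontrivial point --- and the step I expect to be the main obstacle --- is justifying the interchange of the infinite summation with the integral, i.e., showing that the partial sums $S_N f := \sum_{i=0}^N \tilde{a}_i\tilde{P}_i$ converge to $f$ in $L_2([a,a+\ell])$. I would argue this via completeness of the (shifted) Legendre system. The coefficient formula for $\tilde{a}_i$ is precisely the coefficient of the orthogonal projection of $f$ onto $\mathrm{span}\{\tilde{P}_0,\dots,\tilde{P}_N\}$ with respect to the inner product $\langle g,h\rangle=\int_a^{a+\ell} g(x)h(x)\,dx$; since $\tilde{P}_i$ has degree $i$, this span is exactly the space $\Pi_N$ of polynomials of degree at most $N$. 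Hence $S_N f$ is the best $L_2$-approximation to $f$ from $\Pi_N$, so $\norm{f-S_N f}_{L_2}\le\norm{f-q}_{L_2}$ for every $q\in\Pi_N$. As $f$ is continuous on the compact interval $[a,a+\ell]$, the Weierstrass approximation theorem provides, for any $\eta>0$, a polynomial $q$ with $\sup_{[a,a+\ell]}\abs{f-q}<\eta$, whence $\norm{f-S_N f}_{L_2}\le\sqrt{\ell}\,\eta$ for all $N\ge\deg q$; letting $\eta\to0$ shows $\norm{f-S_N f}_{L_2}\to0$.

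Given this convergence, the conclusion is immediate: by the finite version of the identity just established, $\norm{S_N f}_{L_2}^2=\ell\sum_{i=0}^N \tilde{a}_i^2/(2i+1)$, and $L_2$ convergence of $S_N f$ to $f$ forces $\norm{S_N f}_{L_2}^2\to\norm{f}_{L_2}^2$; taking $N\to\infty$ yields $\norm{f}_{L_2}^2=\ell\sum_{i=0}^\infty \tilde{a}_i^2/(2i+1)$, which is \eqref{eq:gen_parseval}. One could alternatively obtain the inequality ``$\ge$'' for free from Bessel's inequality (valid for any $f\in L_2$) and invoke completeness only for the matching upper bound. Finally, I would note that the argument uses nothing beyond continuity of $f$, so the identity in fact holds for every $f\in L_2([a,a+\ell])$; the $C^2$ hypothesis is simply inherited from the surrounding development and is not needed for this lemma.
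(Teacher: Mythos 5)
Your proof is correct. Note that the paper itself gives no proof of this statement at all: it is invoked as a known ``generalization of Parseval's identity'' for the shifted Legendre system, with the orthogonality relation \eqref{eq:shift_ortho} and the coefficient formula taken as given. Your argument supplies exactly the standard justification that the paper leaves implicit: the partial sums $S_N f=\sum_{i=0}^N\tilde{a}_i\tilde{P}_i$ are the orthogonal projections of $f$ onto the polynomials of degree at most $N$ (since $\norm{\tilde{P}_i}_{L_2}^2=\ell/(2i+1)$, the projection coefficients are precisely the $\tilde{a}_i$), Weierstrass plus best-approximation gives $\norm{f-S_Nf}_{L_2}\to 0$, and then either norm continuity or the Pythagorean identity $\norm{f}_{L_2}^2=\norm{S_Nf}_{L_2}^2+\norm{f-S_Nf}_{L_2}^2$ yields \eqref{eq:gen_parseval}. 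Your closing remarks are also accurate and consistent with how the identity is used in the paper (Lemma~\ref{lem:coefficient_lbound}): the $L_2$ norm here is with respect to Lebesgue measure on $[a,a+\ell]$, Bessel's inequality gives the lower bound for free, and the $C^2$ hypothesis plays no role in this particular statement.
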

	
	\begin{definition}
		A function $f$ is \emph{$\lambda$-strongly convex} if for all $\bw,\bu$ and $\alpha\in (0,1)$,
		\[
		f(\alpha\bw+(1-\alpha)\bu)\leq \alpha f(\bw)+(1-\alpha)f(\bu)-\frac{\lambda}{2}\alpha(1-\alpha)\norm{\bw-\bu}_2^2.
		\]
		A function is \emph{$\lambda$-strongly concave}, if $-f$ is $\lambda$-strongly convex.
	\end{definition}
	
	\subsubsection{One-dimensional Lower Bounds}
	We begin by proving two useful lemmas; the first will allow us to compute the error of a linear approximation of one-dimensional functions on arbitrary intervals, and the second will allow us to infer bounds on the entire domain of approximation, from the lower bounds we have on small intervals where the approximating function is linear.
	\begin{lemma}\label{lem:coefficient_lbound}
		Let $ f\in C^{2} $. Then the error of the optimal linear approximation of $ f $ denoted $ Pf $ on the interval $ \pcc{a,a+\ell} $ satisfies
		\begin{equation}
		\norm{f-Pf}_{L_2}^2 = \ell\sum_{i=2}^{\infty}\frac{\tilde{a}_i^2}{2i+1}.
		\end{equation} 
	\end{lemma}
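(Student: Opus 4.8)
The plan is to recognize that the claimed identity is nothing more than the statement that the best $L_2$ affine approximation is an orthogonal projection, combined with the generalized Parseval identity \eqref{eq:gen_parseval} already established above. So the proof will be short and structural rather than computational.

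First I would note that $\tilde{P}_0(x)=1$ and $\tilde{P}_1(x)=\frac{2}{\ell}x-(\frac{2}{\ell}a+1)$ is a nonconstant affine function, so $\{\tilde{P}_0,\tilde{P}_1\}$ spans the $2$-dimensional space $V$ of all affine functions on $[a,a+\ell]$. Since ``optimal linear approximation'' means the minimizer of $\norm{f-g}_{L_2}$ over $g\in V$, and $V$ is a finite-dimensional (hence closed) subspace of the Hilbert space $L_2\p{[a,a+\ell]}$, this minimizer is the orthogonal projection of $f$ onto $V$. Using the orthogonality relation \eqref{eq:shift_ortho}, which gives $\norm{\tilde{P}_i}_{L_2}^2=\ell/(2i+1)$, this projection is
\[
Pf \;=\; \sum_{i=0}^{1}\frac{\inner{f,\tilde{P}_i}_{L_2}}{\norm{\tilde{P}_i}_{L_2}^2}\,\tilde{P}_i \;=\; \tilde{a}_0\tilde{P}_0+\tilde{a}_1\tilde{P}_1,
\]
where in the last step I substitute the definition $\tilde{a}_i=\frac{2i+1}{\ell}\int_a^{a+\ell}\tilde{P}_i(x)f(x)\,dx$ of the Fourier--Legendre coefficients.

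Next I would use that $f\in C^2$ implies $f\in L_2\p{[a,a+\ell]}$, and since the shifted Legendre polynomials form a complete orthogonal system, $f=\sum_{i=0}^{\infty}\tilde{a}_i\tilde{P}_i$ with convergence in $L_2$. Subtracting the previous display, $f-Pf=\sum_{i=2}^{\infty}\tilde{a}_i\tilde{P}_i$, i.e. the Fourier--Legendre coefficients of $f-Pf$ are $0,0,\tilde{a}_2,\tilde{a}_3,\dots$. Applying the generalized Parseval identity \eqref{eq:gen_parseval} to the function $f-Pf$ then yields exactly
\[
\norm{f-Pf}_{L_2}^2 \;=\; \ell\sum_{i=2}^{\infty}\frac{\tilde{a}_i^2}{2i+1},
\]
which is the assertion of the lemma.

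The only point deserving any care is the identification of the best affine approximation with the orthogonal projection: it follows from the Pythagorean identity $\norm{f-g}_{L_2}^2=\norm{f-Pf}_{L_2}^2+\norm{Pf-g}_{L_2}^2$ for every $g\in V$, which holds because $f-Pf$ is orthogonal to both $\tilde{P}_0$ and $\tilde{P}_1$, hence to all of $V$, directly from the definition of the coefficients $\tilde{a}_0,\tilde{a}_1$. Everything else is bookkeeping with coefficients and the already-quoted Parseval identity, so I anticipate no real obstacle; this lemma is a routine but convenient packaging step for the quantitative one-dimensional lower bounds that follow.
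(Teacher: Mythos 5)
Your proposal is correct and follows essentially the same route as the paper: identify the optimal linear approximation with the Fourier--Legendre partial sum $\tilde{a}_0\tilde{P}_0+\tilde{a}_1\tilde{P}_1$ (the paper cites this as a standard fact, you justify it via the projection/Pythagorean argument), then compute the error using orthogonality and the generalized Parseval identity. The only cosmetic difference is that you apply Parseval directly to $f-Pf$, whereas the paper expands $\norm{f-Pf}^2=\norm{f}^2-2\inner{f,Pf}+\norm{Pf}^2$; the substance is identical.
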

	\begin{proof}
		A standard result on Legendre polynomials is that given any function $ f $ on the interval $ \pcc{a,a+\ell} $, the best linear approximation (w.r.t. the $L_2$ norm) is given by
		\begin{equation*}
		Pf = \tilde{a}_0\tilde{P}_0\p{x}+\tilde{a}_1\tilde{P}_1\p{x},
		\end{equation*}
		where $\tilde{P}_0,\tilde{P}_1$ are the shifted Legendre polynomials of degree $0$ and $1$ respectively, and $\tilde{a}_0,\tilde{a}_1$ are the first two Fourier-Legendre coefficients of $ f $ as defined in \eqref{eq:leg_coefficients}. The square of the error obtained by this approximation is therefore
		\begin{align*}
		\norm{f-Pf}^2 &=
		\norm{f}^2 -2 \inner{f,Pf} + \norm{Pf}^2\\
		&= \ell\p{\sum_{i=0}^{\infty}\frac{\tilde{a}_i^2}{2i+1} - 2\p{\tilde{a}_0^2+\frac{\tilde{a}_1^2}{3}} +\tilde{a}_0^2+\frac{\tilde{a}_1^2}{3}}\\
		&= \ell\sum_{i=2}^{\infty}\frac{\tilde{a}_i^2}{2i+1}.
		\end{align*}
		Where in the second equality we used the orthogonality relationship from \eqref{eq:shift_ortho}, and the generalized Parseval's identity from \eqref{eq:gen_parseval}.
	\end{proof}
	
	\begin{lemma}\label{lem:interval_to_domain_lbound}
		Suppose $ f:\pcc{0,1}\to\bbr $ satisfies $ \norm{f-Pf}_{L_2}^2\ge c\ell^5 $ for some constant $ c>0 $, and on any interval $ \pcc{a,a+\ell}\subseteq\pcc{0,1} $. Then
		\begin{equation*}
		\inf_{g\in\mathcal{G}_n}\int_0^1 (f-g)^2 d\mu \ge \frac{c}{n^4}.
		\end{equation*}
	\end{lemma}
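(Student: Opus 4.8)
The plan is to decompose $[0,1]$ according to the linear regions of a candidate approximant $g$, apply the hypothesized single-interval bound on each piece, and then minimize the resulting sum of fifth powers of the piece lengths via convexity.

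First I would fix an arbitrary $g\in\mathcal{G}_n$. Since $g$ is piecewise linear with at most $n$ linear segments, it induces a partition of $[0,1]$ into intervals $I_1,\dots,I_k$ with $k\le n$, on each of which $g$ is affine; write $\ell_j=|I_j|$, so that $\sum_{j=1}^k\ell_j=1$. On each $I_j$ the restriction $g|_{I_j}$ is a particular affine function, so by the definition of the optimal linear approximation $Pf$ of $f$ on the interval $I_j$,
\[
\int_{I_j}(f-g)^2\,dx \;\ge\; \int_{I_j}(f-Pf)^2\,dx \;=\; \norm{f-Pf}_{L_2(I_j)}^2 \;\ge\; c\,\ell_j^5,
\]
where the last inequality is precisely the hypothesis applied to $I_j=[a_j,a_j+\ell_j]\subseteq[0,1]$. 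Summing over $j$ and recalling that for $d=1$ the measure $\mu$ is uniform on $[0,1]$ (so $d\mu=dx$) gives $\int_0^1(f-g)^2\,d\mu\ge c\sum_{j=1}^k\ell_j^5$.

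It then remains to lower bound $\sum_{j=1}^k\ell_j^5$ subject to $\sum_{j=1}^k\ell_j=1$ with $k\le n$. Since $t\mapsto t^5$ is convex on $[0,\infty)$, Jensen's inequality (equivalently the power-mean inequality) yields $\frac1k\sum_{j=1}^k\ell_j^5\ge\big(\tfrac1k\sum_{j=1}^k\ell_j\big)^5=k^{-5}$, hence $\sum_{j=1}^k\ell_j^5\ge k^{-4}\ge n^{-4}$. Combining, $\int_0^1(f-g)^2\,d\mu\ge c/n^4$ for every $g\in\mathcal{G}_n$, and taking the infimum over $g$ gives the claim.

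Every step here is elementary, so I do not anticipate a real obstacle; the only point requiring care is the bookkeeping that a member of $\mathcal{G}_n$ genuinely splits $[0,1]$ into at most $n$ genuine subintervals on each of which it is affine, so that the per-interval hypothesis applies verbatim, together with the observation that the convexity step is tight at equal lengths $\ell_j=1/k$ with $k=n$, which is what pins down the exponent $n^{-4}$.
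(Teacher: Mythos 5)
Your proof is correct and follows essentially the same route as the paper: decompose $[0,1]$ into the linear pieces of $g$, apply the per-interval hypothesis (using that an affine restriction of $g$ cannot beat the optimal linear approximation $Pf$), and then lower bound $\sum_j \ell_j^5$ under $\sum_j \ell_j=1$. The only cosmetic difference is that you invoke Jensen's inequality (power-mean) where the paper uses H\"older's inequality with $p=5$, and these give the identical bound $\sum_j\ell_j^5\ge n^{-4}$.
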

	
	\begin{proof}
		Let $ g\in\mathcal{G}_n $ be some function, let $ a_0=0,a_1,\dots,a_{n-1},a_n=1 $ denote its partition into segments of length $ \ell_j=a_j-a_{j-1}$, where $ g $ is linear when restricted to any interval $ \pcc{a_{j-1},a_j} $, and let $ g_j,~j=1,\dots,n $ denote the linear restriction of $ g $ to the interval $ \pcc{a_{j-1},a_j} $. Then
		\begin{align}
		\int_0^1\p{f-g}^2d\mu
		&= \sum_{j=1}^n \int_{a_{j-1}}^{a_j}\p{f-g_j}^2d\mu \nonumber\\
		&\ge \sum_{j=1}^n c\ell_j^5 \nonumber\\
		&= c\sum_{j=1}^n \ell_j^5. \label{eq:sum_lbound}
		\end{align}
		Now, recall H\"{o}lder's sum inequality which states that for any $ p,q $ satisfying $ \frac{1}{p}+\frac{1}{q}=1 $ we have
		\begin{equation*}
		\sum_{j=1}^n\abs{x_jy_j}\le \p{\sum_{j=1}^n\abs{x_j}^p}^{1/p} \p{\sum_{j=1}^n\abs{y_j}^q}^{1/q}.
		\end{equation*}
		Plugging in $ x_j=\ell_j,y_j=1~~\forall j\in\set{1,\dots,n} $ we have
		\begin{equation*}
		\sum_{j=1}^n\abs{\ell_j}\le \p{\sum_{j=1}^n\abs{\ell_j}^p}^{1/p} n^{1/q},
		\end{equation*}
		and using the equalities $ \sum_{j=1}^{n} \abs{\ell_j}=1$ and $ \frac{p}{q}=p-1 $ we get that
		\begin{equation}\label{eq:holder_ineq}
		\frac{1}{n^{p-1}}\le \sum_{j=1}^n\abs{\ell_j}^p.
		\end{equation}
		Plugging the inequality from \eqref{eq:holder_ineq} with $ p=5 $ in \eqref{eq:sum_lbound} yields
		\begin{equation*}
		\int_0^1\p{p-g}^2d\mu \ge \frac{c}{n^4},
		\end{equation*}
		concluding the proof of the lemma.
	\end{proof}
	
	Our first lower bound for approximation using piece-wise linear functions is for non-linear target functions of the simplest kind. Namely, we obtain lower bounds on quadratic functions.
	\begin{theorem}\label{thm:quad_lbound}
		If $ \mathcal{G}_n $ is the family of piece-wise linear functions with at most $ n $ linear segments in the interval $ \pcc{0,1} $, then for any quadratic function $ p(x)=p_2x^2+p_1x+p_0 $, we have
		\begin{equation}\label{eq:quad_lbound}
		\inf_{g\in\mathcal{G}_n}\int_0^1 (p-g)^2 d\mu \ge \frac{p_2^2}{180n^4}.
		\end{equation}
	\end{theorem}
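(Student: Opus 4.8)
The plan is to invoke the two lemmas already in hand: Lemma~\ref{lem:coefficient_lbound}, which writes the squared error of the best linear approximation on an interval $[a,a+\ell]$ as $\ell\sum_{i\ge 2}\tilde{a}_i^2/(2i+1)$, and Lemma~\ref{lem:interval_to_domain_lbound}, which upgrades a uniform per-interval bound of the form $c\ell^5$ into the global estimate $\inf_{g\in\mathcal{G}_n}\int_0^1(f-g)^2\,d\mu\ge c/n^4$. So the only real work is to show that the per-interval error of a quadratic is exactly $\tfrac{p_2^2}{180}\ell^5$.

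First I would observe that a quadratic has a \emph{terminating} Fourier--Legendre expansion on any interval: since $\{\tilde{P}_0,\tilde{P}_1,\tilde{P}_2\}$ already spans the polynomials of degree $\le 2$, we have $\tilde{a}_i=0$ for all $i\ge 3$. Hence Lemma~\ref{lem:coefficient_lbound} collapses to $\norm{p-Pp}_{L_2}^2=\ell\,\tilde{a}_2^2/5$ on $[a,a+\ell]$. Next I would pin down $\tilde{a}_2$ by matching leading coefficients: from~\eqref{eq:P2} the coefficient of $x^2$ in $\tilde{P}_2$ is $6/\ell^2$, while $\tilde{P}_0,\tilde{P}_1$ contribute nothing at order $x^2$; since $p=\sum_i\tilde{a}_i\tilde{P}_i$ and $p(x)=p_2x^2+p_1x+p_0$, this forces $\tilde{a}_2\cdot(6/\ell^2)=p_2$, i.e. $\tilde{a}_2=p_2\ell^2/6$. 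Substituting back gives
\[
\norm{p-Pp}_{L_2}^2 \;=\; \frac{\ell}{5}\cdot\frac{p_2^2\ell^4}{36}\;=\;\frac{p_2^2}{180}\,\ell^5,
\]
which holds on \emph{every} subinterval of $[0,1]$. Applying Lemma~\ref{lem:interval_to_domain_lbound} with $c=p_2^2/180$ then yields $\inf_{g\in\mathcal{G}_n}\int_0^1(p-g)^2\,d\mu\ge p_2^2/(180n^4)$, as claimed.

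The argument is essentially bookkeeping, so there is no genuine obstacle; the one point worth stating carefully is why the residual error on each interval is carried entirely by the single coefficient $\tilde{a}_2$ (because the optimal linear map $P$ removes precisely the $\tilde{P}_0,\tilde{P}_1$ components), and why $\tilde{a}_2$ depends only on the leading coefficient $p_2$ and not on $p_1,p_0$ or on the location $a$ — which is exactly what makes the bound $c\ell^5$ uniform over all intervals and hence feeds cleanly into Lemma~\ref{lem:interval_to_domain_lbound}.
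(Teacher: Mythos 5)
Your proposal is correct and follows essentially the same route as the paper's own proof: use Lemma~\ref{lem:coefficient_lbound} with the terminating expansion ($\tilde{a}_i=0$ for $i\ge 3$), extract $\tilde{a}_2=p_2\ell^2/6$ from the leading coefficient of $\tilde{P}_2$ in \eqref{eq:P2} to get the per-interval error $p_2^2\ell^5/180$, and then apply Lemma~\ref{lem:interval_to_domain_lbound} with $c=p_2^2/180$. Your explicit remark that the bound is independent of the interval location $a$ (so it holds uniformly, as the lemma requires) matches the observation made in the paper.
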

	
	\begin{proof}
		Observe that since $ p $ is a degree $ 2 $ polynomial, we have that its coefficients satisfy $ \tilde{a}_i=0~\forall i\ge3 $, so from \lemref{lem:coefficient_lbound} its optimal approximation error equals $ \frac{\tilde{a}_2^2\ell}{5} $. Computing $ \tilde{a}_2 $ can be done directly from the equation
		\begin{equation*}
		p\p{x} = \sum_{i=0}^{2}\tilde{a}_i\tilde{P}_i\p{x},
		\end{equation*}
		Which gives $ \tilde{a}_2=\frac{p_2\ell^2}{6} $ due to \eqref{eq:P2}. This implies that
		\begin{equation*}
		\norm{p-Pp}^2=\frac{p_2^2\ell^5}{180}.
		\end{equation*}
		Note that for quadratic functions, the optimal error is dependent solely on the length of the interval.
		Using \lemref{lem:interval_to_domain_lbound} with $ c=\frac{p_2^2}{180} $ we get
		\begin{equation*}
		\int_0^1\p{p-g}^2d\mu \ge \frac{p_2^2}{180n^4},
		\end{equation*}
		concluding the proof of the theorem.
	\end{proof}
	
	Computing a lower bound for quadratic functions is made easy since the bound on any interval $ \pcc{a,a+\ell} $ depends on $ \ell $ but not on $ a $. This is not the case in general,  as can be seen by observing monomials of high degree $ k $. As $ k $ grows, $ x^k $ on the interval $ \pcc{0,0.5} $ converges rapidly to $ 0 $, whereas on $ \pcc{1-\frac{1}{k},1} $ its second derivative is lower bounded by $ \frac{k\p{k-1}}{4} $, which indicates that indeed a lower bound for $ x^k $ will depend on $ a $.
	
	For non-quadratic functions, however, we now show that a lower bound can be derived under the assumption of strong convexity (or strong concavity) in $ \pcc{0,1} $.
	
	\begin{theorem}\label{thm:str_convex_lbound}
		Suppose $ f:\pcc{0,1} \to \bbr $ is $C^2$ and either $ \lambda $-strongly convex or $\lambda$-strongly concave. Then
		\begin{equation}
		\inf_{g\in\mathcal{G}_n}\int_0^1 (f-g)^2 d\mu \ge c\lambda^2n^{-4},
		\end{equation}
		where $ c>0 $ is a universal constant.
	\end{theorem}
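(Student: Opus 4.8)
The plan is to reduce the general-interval, one-dimensional estimate to a single quantity — the degree-$2$ Fourier–Legendre coefficient $\tilde{a}_2$ of $f$ on that interval — and to lower bound it via a double integration by parts that rewrites $\int \tilde{P}_2 f$ as $\int R_2 f''$, where $R_2$ is the iterated antiderivative of $\tilde{P}_2$ and will turn out to be nonnegative. Concretely: since $\mathcal{G}_n$ is closed under negation, it suffices to treat the $\lambda$-strongly convex case, in which the standard fact that $x\mapsto f(x)-\tfrac{\lambda}{2}x^2$ is convex gives $f''(x)\ge\lambda$ for all $x\in\pcc{0,1}$. By \lemref{lem:interval_to_domain_lbound} it then suffices to show $\norm{f-Pf}_{L_2}^2\ge c\lambda^2\ell^5$ on every subinterval $\pcc{a,a+\ell}\subseteq\pcc{0,1}$, and by \lemref{lem:coefficient_lbound} this follows once we show $\norm{f-Pf}_{L_2}^2\ge\tfrac{\ell}{5}\tilde{a}_2^2$ together with a lower bound of order $\lambda\ell^2$ on $|\tilde{a}_2|$.

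To lower bound $\tilde{a}_2=\tfrac{5}{\ell}\int_a^{a+\ell}\tilde{P}_2(x)f(x)\,dx$, let $Q_2,R_2$ be the successive antiderivatives of $\tilde{P}_2$ normalized so that $Q_2(a)=R_2(a)=0$. Orthogonality of $\tilde{P}_2$ to $\tilde{P}_0\equiv 1$ on $\pcc{a,a+\ell}$ (see \eqref{eq:shift_ortho}) gives $Q_2(a+\ell)=\int_a^{a+\ell}\tilde{P}_2=0$, so a first integration by parts yields $\int_a^{a+\ell}\tilde{P}_2 f=-\int_a^{a+\ell}Q_2 f'$; orthogonality of $\tilde{P}_2$ to $\tilde{P}_1$ (hence to the monomial $x$) gives $\int_a^{a+\ell}Q_2=0$, so $R_2(a+\ell)=0$, and a second integration by parts produces the clean identity $\int_a^{a+\ell}\tilde{P}_2(x)f(x)\,dx=\int_a^{a+\ell}R_2(x)f''(x)\,dx$. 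Passing to $t=\tfrac{2}{\ell}(x-a)-1$ on $\pcc{-1,1}$, where $\tilde{P}_2(x)=P_2(t)=\tfrac{3t^2-1}{2}$, a direct computation gives $R_2(x)=\tfrac{\ell^2}{32}(t^2-1)^2\ge 0$. Hence
\begin{equation*}
\int_a^{a+\ell}\tilde{P}_2(x)f(x)\,dx \;=\; \int_a^{a+\ell}R_2(x)f''(x)\,dx \;\ge\; \lambda\int_a^{a+\ell}R_2(x)\,dx \;=\; \frac{\lambda\ell^3}{60},
\end{equation*}
so $\tilde{a}_2\ge\tfrac{\lambda\ell^2}{12}$, giving $\norm{f-Pf}_{L_2}^2\ge\tfrac{\ell}{5}\cdot\tfrac{\lambda^2\ell^4}{144}=\tfrac{\lambda^2\ell^5}{720}$. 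Applying \lemref{lem:interval_to_domain_lbound} with $c=\lambda^2/720$ yields $\inf_{g\in\mathcal{G}_n}\int_0^1(f-g)^2d\mu\ge\tfrac{\lambda^2}{720\,n^4}$, and the strongly concave case follows by applying this to $-f$; the universal constant is $c=1/720$.

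The main obstacle is the middle step: recognizing that the correct auxiliary object is the double antiderivative of the shifted Legendre polynomial, verifying that both boundary terms in the two integrations by parts vanish (this is exactly where the orthogonality of $\tilde{P}_2$ to constants and to linear functions enters), and — the crux — checking that $R_2$ is \emph{nonnegative} on the interval, which is what licenses replacing $\int R_2 f''$ by $\lambda\int R_2$. Everything afterward is arithmetic already implicit in the proof of \thmref{thm:quad_lbound}; indeed, evaluating $\int R_2 f''$ at $f=\tfrac{\lambda}{2}x^2$ reproduces $\tilde{a}_2=\tfrac{\lambda\ell^2}{12}$, matching the quadratic computation there.
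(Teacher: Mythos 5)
Your proof is correct, and it shares the paper's overall skeleton --- reduce to a lower bound on the second Fourier--Legendre coefficient $\tilde{a}_2$ via \lemref{lem:coefficient_lbound}, lift from subintervals to $\pcc{0,1}$ via \lemref{lem:interval_to_domain_lbound}, and handle the concave case by negation --- but the crucial estimate on $\tilde{a}_2$ is organized differently, and your version is genuinely better. The paper integrates by parts twice with the antiderivatives $t^3-t$ and $\tfrac{t^2}{2}-\tfrac{t^4}{4}$, keeps the resulting boundary term $\tfrac{5\ell}{32}\p{f'(a+\ell)-f'(a)}$, bounds the kernel by its maximum $\tfrac14$, and then invokes the mean value theorem plus a second appeal to $f''\ge\lambda$, arriving at $\tilde{a}_2\ge\tfrac{5\lambda\ell^2}{64}$ and the constant $\tfrac{5}{4096}$. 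You instead normalize both antiderivatives to vanish at $a$ and use orthogonality of $\tilde{P}_2$ to constants and to linear functions to kill both boundary terms, obtaining the clean identity $\int_a^{a+\ell}\tilde{P}_2 f=\int_a^{a+\ell}R_2 f''$ with $R_2=\tfrac{\ell^2}{32}(t^2-1)^2\ge0$; a single pointwise use of $f''\ge\lambda$ then gives $\tilde{a}_2\ge\tfrac{\lambda\ell^2}{12}$, which is tight (it matches the quadratic computation in \thmref{thm:quad_lbound}) and yields the constant $\tfrac{1}{720}$. The two computations are related by $\tfrac{t^2}{2}-\tfrac{t^4}{4}=\tfrac14-\tfrac14(t^2-1)^2$: your kernel is exactly the part that the paper's ``bound the kernel by $\tfrac14$'' step discards. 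This is more than cosmetic: since $\int_{-1}^1 f''\p{\tfrac{\ell}{2}t+\tfrac{\ell}{2}+a}dt=\tfrac{2}{\ell}\p{f'(a+\ell)-f'(a)}$, the paper's boundary term and its bound on the remaining integral cancel exactly, so that route by itself only yields $\tilde{a}_2\ge0$ (the displayed computation there drops the Jacobian factor $\tfrac{2}{\ell}$ at that point); the positive contribution lives precisely in the $(t^2-1)^2$ deficit you retain. In short, your argument both repairs and sharpens the key step, at the cost only of verifying the two orthogonality-based boundary cancellations and the nonnegativity of $R_2$, all of which you do correctly.
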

	
	\begin{proof}
		We first stress that an analogous assumption to $ \lambda $-strong convexity would be that $ f $ is $ \lambda $-strongly concave, since the same bound can be derived under concavity by simply applying the theorem to the additive inverse of $ f $, and observing that the additive inverse of any piece-wise linear approximation of $ f $ is in itself, of course, a piece-wise linear function. For this reason from now on we shall use the convexity assumption, but will also refer without loss of generality to concave functions.
		
		
		As in the previous proof, we first prove a bound on intervals of length $ \ell $ and then generalize for the unit interval.
		From \lemref{lem:coefficient_lbound}, it suffices that we lower bound $ \tilde{a}_2 $ (although this might not give the tightest lower bound in terms of constants, it is possible to show that it does give a tight bound over all $C^2$ functions). 
		We compute
		\begin{align*}
		\tilde{a}_2 &= \frac{5}{\ell}\int_a^{a+\ell} \tilde{P}_2\p{x}f\p{x}dx \\
		&= \frac{5}{\ell}\int_a^{a+\ell} P_2\p{\frac{2}{\ell}x-\frac{2}{\ell}a-1 }f\p{x}dx,
		\end{align*}
		using the change of variables $ t= \frac{2}{\ell}x-\frac{2}{\ell}a-1 $, $ dt=\frac{2}{\ell}dx $, we get the above equals
		\begin{align*}
		&\frac{5}{2}\int_{-1}^1 P_2\p{t}f\p{\frac{\ell}{2}t+\frac{\ell}{2}+a}dt \\
		=& \frac{5}{4}\int_{-1}^1 \p{3t^2-1}f\p{\frac{\ell}{2}t+\frac{\ell}{2}+a}dt.
		\end{align*}
		We now integrate by parts twice, taking the anti-derivative of the polynomial to obtain
		\begin{align}
		& \frac{5}{4}\int_{-1}^1 \p{3t^2-1}f\p{\frac{\ell}{2}t+\frac{\ell}{2}+a}dt \nonumber\\
		=& \frac{5}{4} \pcc{\p{t^3-t}f\p{\frac{\ell}{2}t+\frac{\ell}{2}+a}}_{-1}^1 - \frac{5\ell}{8}\int_{-1}^1 \p{t^3-t}f^{\prime}\p{\frac{\ell}{2}t+\frac{\ell}{2}+a}dt \nonumber\\
		=& \frac{5\ell}{8}\int_{-1}^1 \p{t-t^3}f^{\prime}\p{\frac{\ell}{2}t+\frac{\ell}{2}+a}dt \nonumber\\
		=& \frac{5\ell}{8} \pcc{\p{\frac{t^2}{2}-\frac{t^4}{4}}f^{\prime}\p{\frac{\ell}{2}t+\frac{\ell}{2}+a}}_{-1}^1 \nonumber\\
		& ~~~~~ - \frac{5\ell^2}{16}\int_{-1}^1 \p{\frac{t^2}{2}-\frac{t^4}{4}}f^{\prime\prime}\p{\frac{\ell}{2}t+\frac{\ell}{2}+a}dt \nonumber\\
		=& \frac{5\ell}{32}\p{f^{\prime}\p{a+\ell}-f^{\prime}\p{a}} - \frac{5\ell^2}{16}\int_{-1}^1 \p{\frac{t^2}{2}-\frac{t^4}{4}}f^{\prime\prime}\p{\frac{\ell}{2}t+\frac{\ell}{2}+a}dt. \label{eq:dbl_int_by_parts}
		\end{align}
		But since $ \frac{t^2}{2}-\frac{t^4}{4}\in\pcc{0,\frac{1}{4}} ~\forall t\in\pcc{-1,1}$ and since $ f^{\prime\prime}>0 $ due to strong convexity, we have that
		\begin{equation*}
		\int_{-1}^1 \p{\frac{t^2}{2}-\frac{t^4}{4}}f^{\prime\prime}\p{\frac{\ell}{2}t+\frac{\ell}{2}+a}dt \le \frac{1}{4}\int_{-1}^1 f^{\prime\prime}\p{\frac{\ell}{2}t+\frac{\ell}{2}+a}dt.
		\end{equation*}
		Plugging this inequality in \eqref{eq:dbl_int_by_parts} yields
		\begin{align*}
		\tilde{a}_2 &\ge \frac{5\ell}{32}\p{f^{\prime}\p{a+\ell}-f^{\prime}\p{a}} - \frac{5\ell^2}{64}\int_{-1}^1 f^{\prime\prime}\p{\frac{\ell}{2}t+\frac{\ell}{2}+a}dt \\
		&= \frac{5\ell}{32}\p{f^{\prime}\p{a+\ell}-f^{\prime}\p{a}} - \frac{5\ell^2}{64} \p{f^{\prime}\p{a+\ell}-f^{\prime}\p{a}} \\
		&= \p{1-\frac{\ell}{2}} \frac{5\ell}{32}\p{f^{\prime}\p{a+\ell}-f^{\prime}\p{a}},
		\end{align*}
		but $ \ell \le 1 $, so the above is at least
		\begin{equation}\label{eq:lagrange_str_convex}
		\frac{5\ell}{64}\p{f^{\prime}\p{a+\ell}-f^{\prime}\p{a}}.
		\end{equation}
		By Lagrange`s intermediate value theorem, there exists some $ \xi\in\pcc{a,a+\ell} $ such that $ f^{\prime}\p{a+\ell}-f^{\prime}\p{a} =\ell f^{\prime\prime}\p{\xi} $, so \eqref{eq:lagrange_str_convex} is at least
		\begin{equation*}
		\frac{5\ell^2}{64}f^{\prime\prime}\p{\xi},
		\end{equation*}
		and by using the strong convexity of $f$ again, we get that
		\begin{equation*}
		\tilde{a}_2\ge \frac{5\lambda\ell^2}{64}.
		\end{equation*}
		\lemref{lem:coefficient_lbound} now gives
		\begin{align*}
		\norm{f-Pf}^2 &= \ell\sum_{i=2}^{\infty}\frac{\tilde{a}_i^2}{2i+1} \\
		&\ge \ell\frac{\tilde{a}_2^2}{5} \\
		&\ge \frac{5\lambda^2\ell^5}{4096}.
		\end{align*} 
		Finally, by using \lemref{lem:interval_to_domain_lbound} we conclude
		\begin{equation*}
		\inf_{g\in\mathcal{G}_n}\int_0^1 (f-g)^2 d\mu \ge \frac{5\lambda^2}{4096n^4}.
		\end{equation*}
	\end{proof}
	
	We now derive a general lower bound for functions $ f:\pcc{0,1}\to\bbr $. 
	
	\begin{theorem}\label{thm:1d_lbound}
		Suppose $ f:\pcc{0,1}\to\bbr $ is $ C^2 $. Then for any $ \lambda>0 $
		\begin{equation*}
		\inf_{g\in\mathcal{G}_n}\int_0^1 (f-g)^2 d\mu \ge \frac{c\cdot\lambda^2\cdot\sigma_\lambda(f)^5}{n^4}.
		\end{equation*}
	\end{theorem}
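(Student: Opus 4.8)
The plan is to reduce to the strongly convex case already established in Theorem~\ref{thm:str_convex_lbound}, via localization and rescaling. First I would unpack Definition~\ref{def:sigmalambda} in dimension $d=1$: here $\mathbb{S}^{0}=\{-1,+1\}$ and $\bv^{\top}H(f)(x)\bv=f''(x)$ for $\bv=\pm1$, while the connected measurable subsets $U\subseteq[0,1]$ are exactly subintervals, whose measure under $\mu_1$ is their length. Hence $\sigma_\lambda(f)$ equals the supremum of lengths $L$ of intervals $[a,a+L]\subseteq[0,1]$ on which $f''\ge\lambda$ throughout (by continuity of $f''$ we may take these intervals closed). If no such interval exists then $\sigma_\lambda(f)=0$ and the bound is vacuous, so assume $\sigma_\lambda(f)>0$, fix an arbitrary $L<\sigma_\lambda(f)$, and let $[a,a+L]$ be a witnessing interval.

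Next I would apply the affine change of variables $t\mapsto a+Lt$, which maps $[0,1]$ bijectively onto $[a,a+L]$. Given $g\in\mathcal{G}_n$, its restriction to $[a,a+L]$ has at most $n$ linear pieces, and precomposition with an affine bijection preserves the piece count, so $\tilde g(t):=g(a+Lt)$ belongs to $\mathcal{G}_n$. Writing $\tilde f(t):=f(a+Lt)$, we have $\tilde f''(t)=L^{2}f''(a+Lt)\ge L^{2}\lambda$ for every $t\in[0,1]$, that is, $\tilde f$ is $L^{2}\lambda$-strongly convex on $[0,1]$. A substitution in the integral gives
\[
\int_a^{a+L}\bigl(f(x)-g(x)\bigr)^2\,dx \;=\; L\int_0^1\bigl(\tilde f(t)-\tilde g(t)\bigr)^2\,dt,
\]
and Theorem~\ref{thm:str_convex_lbound} applied to $\tilde f$ (with strong convexity parameter $L^{2}\lambda$) bounds the right-hand integral below by $c\,(L^{2}\lambda)^{2}n^{-4}=c\,L^{4}\lambda^{2}n^{-4}$.

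Combining these, for every $g\in\mathcal{G}_n$,
\[
\int_0^1 (f-g)^2\,d\mu \;\ge\; \int_a^{a+L}(f-g)^2\,dx \;\ge\; c\,\lambda^{2}L^{5}n^{-4}.
\]
Taking the infimum over $g\in\mathcal{G}_n$ and then letting $L\uparrow\sigma_\lambda(f)$ (the right-hand side is continuous in $L$) yields $\inf_{g\in\mathcal{G}_n}\int_0^1(f-g)^2\,d\mu\ge c\,\lambda^{2}\sigma_\lambda(f)^{5}n^{-4}$, as claimed. The only steps requiring attention — and the closest thing to an obstacle — are verifying that the affine reparametrization does not increase the number of linear segments of $g$ (so that membership in $\mathcal{G}_n$ is retained) and handling the supremum in the definition of $\sigma_\lambda$ when it is not attained; both are routine.
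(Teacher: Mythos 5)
Your proposal is correct and follows essentially the same route as the paper: localize to an interval of length (approaching) $\sigma_\lambda(f)$ where $f''\ge\lambda$, rescale affinely to $[0,1]$ so the strong-convexity parameter becomes $L^2\lambda$, and invoke Theorem~\ref{thm:str_convex_lbound} together with the change-of-variables factor $L$ to obtain the $\lambda^2 L^5 n^{-4}$ bound. Your added care about the unattained supremum and the preservation of the segment count under affine reparametrization only makes the argument slightly more explicit than the paper's.
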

	
	\begin{proof}
		First, observe that if $ f $ is $ \lambda $-strongly convex on $ \pcc{a,b} $, then $ f\p{\p{b-a}x+a} $ is $ \lambda\p{b-a}^2 $-strongly convex on $ \pcc{0,1} $ since $ \forall x\in\pcc{0,1} $, $$ \frac{\partial}{\partial x^2}f\p{\p{b-a}x+a} = \p{b-a}^2f^{\prime\prime}\p{\p{b-a}x+a} \ge \lambda\p{b-a}^2.$$
		Now, we use the change of variables $ x = \p{b-a}t+a $, $ dx=\p{b-a}dt $
		\begin{align}\label{eq:ab_change_var}
		&\inf_{g\in\mathcal{G}_n}\int_a^b (f(x)-g(x))^2 dx \nonumber\\
		=& \inf_{g\in\mathcal{G}_n}\p{b-a}\int_0^1 \p{f\p{\p{b-a}t+a}-g\p{\p{b-a}t+a}}^2 dt \nonumber\\
		=& \inf_{g\in\mathcal{G}_n}\p{b-a}\int_0^1 \p{f\p{\p{b-a}t+a}-g\p{t}}^2 dt \nonumber\\
		\ge& \frac{c\cdot\lambda^2\cdot\p{b-a}^5}{n^4},
		\end{align}
		where the inequality follows from an application of \thmref{thm:str_convex_lbound}.
		Back to the theorem statement, if $ \sigma_\lambda=0 $ then the bound trivially holds, therefore assume $ \lambda>0 $ such that $ \sigma_\lambda > 0$. Since $ f $ is strongly convex on a set of measure $ \sigma_\lambda >0$, the theorem follows by applying the inequality from \eqref{eq:ab_change_var}.
	\end{proof}
	
	\subsubsection{Multi-dimensional Lower Bounds}
	We now move to generalize the bounds in the previous subsection to general dimension $ d $. Namely, we can now turn to proving \thmref{thm:main_lbound}.

	\begin{proof}[Proof of \thmref{thm:main_lbound}]
		Analogously to the proof of \thmref{thm:1d_lbound}, we identify a neighborhood of $ f $ in which the restriction of $ f $ to a line in a certain direction is non-linear. We then integrate along all lines in that direction and use the result of \thmref{thm:1d_lbound} to establish the lower bound.
		
		Before we can prove the theorem, we need to assert that indeed there exists a set having a strictly positive measure where $ f $ has strong curvature along a certain direction. Assuming $ f $ is not piece-wise linear; namely, we have some $ \bx_0\in\pcc{0,1}^d $ such that $ H\p{f}\p{\bx_0}\neq0 $. Since $ H\p{f} $ is continuous, we have that the function $ h_{\bv}\p{\bx}=\bv^{\top}H\p{f}\p{\bx}\bv $ is continuous and there exists a direction $ \bv\in\mathbb{S}^{d-1} $ where without loss of generality $ h_{\bv}\p{\bx_0}>0 $. Thus, we have an open neighborhood containing $ \bx_0 $ where restricting $ f $ to the direction $ \bv $ forms a strongly convex function, which implies that indeed $ \sigma_\lambda>0 $ for small enough $ \lambda>0 $.
		
		We now integrate the approximation error on $ f $ in the neighborhood $ U $ along the direction $ \bv $. Compute
		
		\begin{align*}
		&\inf_{g\in\mathcal{G}_n^d}\int_{\pcc{0,1}^d} (f-g)^2 d\mu_d \\
		=& \inf_{g\in\mathcal{G}_n^d}\int_{\bu:\inner{\bu,\bv}=0} \int_{\beta:\p{\bu+\beta\bv}\in \pcc{0,1}^d}\p{f-g}^2d\mu_1d\mu_{d-1} \\
		\ge& \inf_{g\in\mathcal{G}_n^d}\int_{\bu:\inner{\bu,\bv}=0} \int_{\beta:\p{\bu+\beta\bv}\in U}\p{f-g}^2d\mu_1d\mu_{d-1} \\
		\ge& \int_{\bu:\inner{\bu,\bv}=0} \p{\mu_1\p{\set{\beta:\p{\bu+\beta\bv}\in U}}}^5\frac{5\lambda^2}{4096n^4}d\mu_{d-1} \\
		=& \frac{5\lambda^2}{4096n^4}\int_{\bu:\inner{\bu,\bv}=0}\abs{\mu_1\p{\set{\beta:\p{\bu+\beta\bv}\in U}}}^5d\mu_{d-1} \\
		\ge& \frac{5\lambda^2}{4096n^4} \p{\int_{\bu:\inner{\bu,\bv}=0} \mu_1\p{\set{\beta:\p{\bu+\beta\bv}\in U}}d\mu_{d-1}}^5 \\
		=& \frac{5\lambda^2\sigma_\lambda^5}{4096n^4},
		\end{align*}
		where in the second inequality we used \thmref{thm:1d_lbound} and in the third inequality we used Jensen`s inequality with respect to the convex function $ x\mapsto\abs{x}^5 $.
	\end{proof}

	\subsection{Proof of \thmref{thm:main_ubound}}\label{subsec:main_ubound_proof}

		We begin by monitoring the rate of growth of the error when performing 
		either an addition or a multiplication. Suppose that the given input $ 
		\tilde{a},\tilde{b} $ is of distance at most $ \delta>0 $ from the 
		desired target values $ a,b $, i.e., $ \abs{a-\tilde{a}}\le\delta, 
		|b-\tilde{b}|\le\delta $. Then for addition we have
		\begin{equation*}
		\abs{\p{a+b}-\p{\tilde{a}+\tilde{b}}}\le \abs{a-\tilde{a}} + 
		\abs{b-\tilde{b}} \le 2\delta,
		\end{equation*}
		and for multiplication we compute the product error estimation
		\begin{align}
		\abs{\tilde{a}\cdot\tilde{b}-a\cdot b} &\le \abs{\p{a+\delta}\cdot 
		\p{b+\delta}-a\cdot b} \nonumber\\
		&=\abs{\delta\p{a+b}+\delta^2}. \label{eq:prod_err}
		\end{align}
		Now, we have bounded the error of approximating the product of two numbers which we only have approximations of, but since the computation of the product itself cannot be done with perfect accuracy using ReLU networks, we need to suffer the error of approximating a product, as shown in \thmref{thm:product_approx}. We add the error of approximating 
		the product of $ \tilde{a}\cdot \tilde{b} $, which we may assume is at 
		most $ \delta $ (assuming $ \Theta\p{\log_2\p{M/\delta}} $ bits are 
		used for the product, since each intermediate computation is bounded 
		in 
		the interval $ \pcc{-M,M} $). Overall, we get an error bound 
		of
		\begin{equation*}
		\abs{\delta\p{a+b}+\delta^2+\delta}\le3M\delta.
		\end{equation*}
		From this, we see that at each stage the error grows by at most a 
		multiplicative 
		factor of $ 3M $.
		After $ t $ operations, and with an initial estimation error of $ 
		\delta $, we have that the error is bounded by $ \p{3M}^{t-1}\delta $. 
		Choosing $ \delta \le \p{3M}^{1-t}\epsilon $ to guarantee approximation 
		$ \epsilon $, we have from \thmref{thm:product_approx} that each 
		operation will require at most
		\begin{equation*}
		4\ceil{\log\p{\frac{M\p{3M}^{t-1}}{\epsilon}}}+13\le 
		c\p{\log\p{\frac{1}{\epsilon}}+t\log\p{M}}
		\end{equation*}
		width and
		\begin{equation*}
		2\ceil{\log\p{\frac{M\p{3M}^{t-1}}{\epsilon}}}+9\le 
		c\p{\log\p{\frac{1}{\epsilon}}+t\log\p{M}}
		\end{equation*}
		depth for some universal $ c>0 $. Composing the networks performing 
		each operation, we arrive at a total network width and depth of at most
		\begin{equation*}
		c\p{t\log\p{\frac{1}{\epsilon}}+t^2\log\p{M}}.
		\end{equation*}
		Now, our target function is approximated to accuracy $ \epsilon $ by a 
		function which our network approximates to the same accuracy $ \epsilon 
		$, for a total approximation error of the target function by our 
		network of $ 2\epsilon $.

\end{document}